\newtheorem{assumption}{Assumption}
\newtheorem{definition}{Definition}
\newtheorem{theorem}{Theorem}
\newtheorem{proof}{Proof}
\newtheorem{rem}{Remark}
\newcommand{\cmark}{\ding{51}}%
\newcommand{\xmark}{\ding{55}}%
\def\tsc#1{\csdef{#1}{\textsc{\lowercase{#1}}\xspace}}
\begin{document}
\let\WriteBookmarks\relax
\def\floatpagepagefraction{1}
\def\textpagefraction{.001}
\shorttitle{Unified Disturbance Aware Safe Kinematic Control for Closed-Architecture Robots}
\shortauthors{F.Zhang et~al.}

\title [mode = title]{Unified Disturbance Aware Safe Kinematic Control for Closed-Architecture Robots}                      

\author[1,2]{Fan Zhang}
\credit{Conceptualization, Data curation, Formal analysis, Investigation, Methodology, Validation, Visualization, Writing - original draft}
\fnmark[1]

\author[1]{Jinfeng Chen}
\credit{Conceptualization, Formal analysis, Investigation, Methodology, Writing - original draft}
\fnmark[1]

\author[3]{Joseph J. B. Mvogo Ahanda}
\credit{Investigation, Methodology}

\author[4]{Hanz Richter}
\credit{Resources, Writing - review \& editing}

\author[5]{Ge Lv}
\credit{Visualization, Writing – review \& editing}

\author[1,2]{Bin Hu}
\credit{Writing – review \& editing}

\author[1,2]{Qin Lin}
\credit{Conceptualization, Investigation, Methodology, Funding acquisition, Project administration, Supervision, Writing - review \& editing}
\cormark[1]

\affiliation[1]{organization={Department of Engineering Technology, University of Houston},
                country={USA}}

\affiliation[2]{organization={Department of Electrical and Computer Engineering, University of Houston},
                country={USA}}

\affiliation[3]{organization={Department of Biomedical Engineering, The University of Ebolowa},
                country={Cameroon}}

\affiliation[4]{organization={Department of Mechanical Engineering, Cleveland State University},
                country={USA}}

\affiliation[5]{organization={Department of Mechanical Engineering, Clemson University},
                country={USA}}

\cortext[cor1]{Corresponding author}
\fntext[fn1]{These authors contribute equally.}

\begin{abstract}
In commercial robotic systems, it is common to encounter a closed inner-loop torque controller that is not user-modifiable. However, the outer-loop controller, which sends kinematic commands such as position or velocity for the inner-loop controller to track, is typically exposed to users. In this work, we focus on the development of an easily integrated add-on at the outer-loop layer by combining disturbance rejection control and robust control barrier function for high-performance tracking and safe control of the whole dynamic system of an industrial manipulator. This is particularly beneficial when 1) the inner-loop controller is imperfect, unmodifiable, and uncertain; and 2) the dynamic model exhibits significant uncertainty. Stability analysis, formal safety guarantee proof, and hardware experiments with a PUMA robotic manipulator are presented. Our solution demonstrates superior performance in terms of simplicity of implementation, robustness, tracking precision, and safety compared to the state of the art. A demonstration video is available at \url{https://youtu.be/e0palGVU_50}, and is also provided as supplementary material for review.
\end{abstract}



\begin{keywords}
Closed-Architecture Robots \sep Disturbance Compensation \sep Robust High-order Control Barrier Function\sep Extended State Observer\sep Kinematic Control\sep Safe Control
\end{keywords}

\maketitle
\section*{{Nomenclature}}
\addcontentsline{toc}{section}{Nomenclature}

\noindent
{\begin{tabular}{@{}p{0.18\columnwidth} p{0.76\columnwidth}@{}}
\toprule
\textbf{Symbol} & \textbf{Description} \\
\midrule
$q, \dot{q}, \ddot{q}$ & Actual joint position, velocity, and acceleration \\
$q^\star, \dot{q}^\star, \ddot{q}^\star$ & Desired reference position, velocity, and acceleration \\
$x, u$ & System state vector ($x=[q^\top, \dot{q}^\top]^\top$) and kinematic control input \\
$M, C, G$ & Actual inertia matrix, Coriolis/centrifugal matrix, and gravity vector \\
$\bar{M}, \bar{C}, \bar{G}$ & Nominal inertia matrix, Coriolis/centrifugal matrix, and gravity vector \\
$K_p, K_d$ & Actual proportional and derivative gains of the inner-loop controller \\
$\bar{K}_d$ & Nominal derivative gain of the inner-loop controller \\
$f$ & Actual lumped disturbance\\
$\hat{f}$ & Estimated lumped disturbance\\
$\psi(x), g(x)$ & System drift vector and control input matrix in the nonlinear control-affine model \\
$h(x)$ & Barrier function candidate \\
$\mathcal{C}$ & Safe set\\
$\omega_o$ & Observer bandwidth \\
$\Gamma$ & Disturbance estimation error bound \\
\bottomrule
\end{tabular}}

\section{Introduction}
Robotic systems typically use a hierarchical architecture combining perception, planning, and low-level control, which improves modularity and reliability \cite{niaz2026twist}. In the control layer, a \emph{kinematic controller} (outer loop) and a \emph{dynamic controller} (inner loop) are commonly used. Since tuning the inner loop requires expertise, it is usually hidden from users for safety, forming a \emph{closed architecture} \cite{wang2019dynamic,ahanda2022adaptive,ahanda2022task,khan2024control}. Users can thus only design the kinematic controller, sending position or velocity commands. However, wear, modifications, or payload changes can alter dynamics and degrade inner-loop performance \cite{liu2022advancements}. This work addresses the challenge of achieving unified high-performance and safe tracking control using only the kinematic controller, despite an imperfect, unmodifiable inner loop, and uncertain dynamics.


{This hierarchical perspective is conceptually closely related to admittance control, where a high-level regulation objective is systematically converted into a desired motion command for a lower-level controller to track. Such outer-loop command generation structures have been validated in physical human-robot interaction \cite{keemink2018admittance} and wearable robotic systems \cite{cao2025load}, transforming high-level objectives into actionable kinematic commands. However, standard admittance control typically assumes perfect inner-loop tracking and designs the control law to render the system as a desired mass-spring-damper model. In contrast, our framework specifically addresses the practical challenge of an imperfect, unmodifiable inner-loop controller. By actively estimating and compensating for lumped disturbances, our approach forces the degraded closed-architecture system to behave as a nominal second-order inertia model, ensuring trajectory tracking stability and safe control.}

Existing works typically rely on the following assumptions: 1) The inner-loop controller is known and accessible to the user; 2) The inner-loop controller is known but not accessible to the user; 3) The inner-loop controller has perfect tracking for outer-loop commands. Our goal is to minimize reliance on knowledge of the dynamic model and the inner-loop controller, while still guaranteeing tracking and safe control using only an outer-loop kinematic controller.
{A more comprehensive comparison with these state-of-the-art methods is presented in Section \ref{sec: Literature Review} and Table \ref{tab:related-work}.}

As shown in Fig.~\ref{fig:Feedback Planning Framework}, existing robotic systems typically send desired position ($q^*$) or velocity ($\dot{q}^*$) commands to the inner-loop controller. Our solution, inserted between them (gray box), comprises a disturbance-rejection tracking controller, an extended state observer (ESO), and an extended state observer-based robust quadratic programming (ESOR-QP) module. The ESO estimates the total disturbance $\hat{f}$ using the joint angle $q$ and the safe kinematic control {input $u_{\text{safe}}$}. This estimate is compensated in the tracking controller to produce nominal commands {$u_{\text{total}}$}, which are monitored by the ESOR-QP. The disturbance estimate, together with its associated error bound $\Gamma$, is incorporated as a robust constraint to ensure safety in the presence of disturbances.

The contributions of our work are summarized as follows:
\begin{itemize}
 \item We establish a disturbance-driven representation for closed-architecture robotic systems using only output measurements, enabling consistent disturbance estimation despite unknown inner-loop dynamics. This representation serves as a shared interface that enables a co-design of tracking and safety.

\item We develop a robust CBF formulation that directly incorporates disturbance estimates in the original system dynamics, in contrast to DOB-CBF approaches that operate on transformed CBF dynamics \cite{dacs2022robust,alan2023disturbance}. The proposed method relies only on output measurements, avoiding the full-state assumption commonly adopted in DOB-based methods \cite{wang2023disturbance}. Moreover, explicit bounds on disturbance estimation errors are derived directly from the observer error dynamics. This avoids the need for additional Lyapunov-based approximation or bounding procedures as in \cite{zhang2024eso}. The framework further extends safety guarantees to systems with higher relative degree, beyond the relative-degree-one setting considered in \cite{chen2023robust}.
 


\item The proposed method is implemented as a plug-and-play outer-loop module that requires no modification of the inner-loop controller, and is validated in real time (1 kHz) on a PUMA 500 manipulator.

\end{itemize}

\begin{figure}[htbp]
    \centering
    \includegraphics[width=1\linewidth]{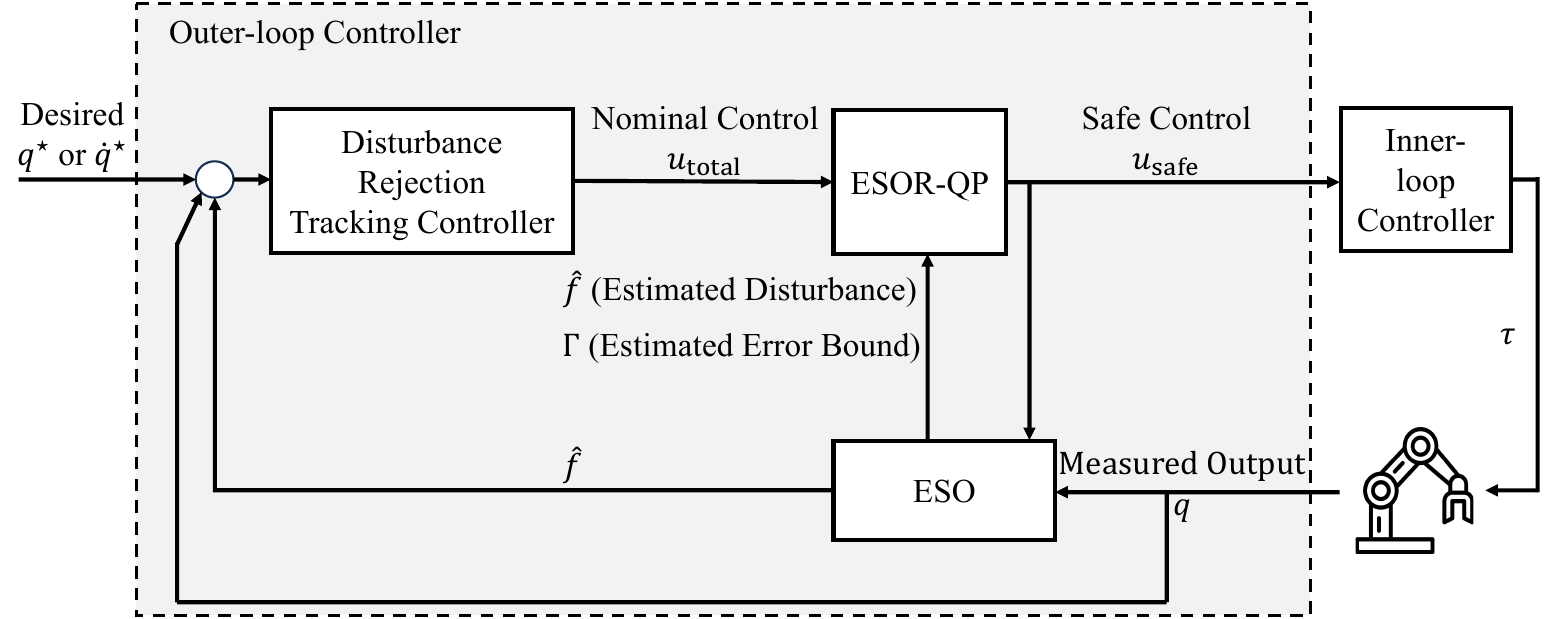}
    \caption{The proposed framework in this paper, is marked as the Outer-loop Controller. The total
estimated disturbance $\hat{f}$ is used for both disturbance compensation and safe control using ESOR-QP. The disturbance estimation error bound $\Gamma$ is used for ESOR-QP for robust safe control under disturbance.}
    \label{fig:Feedback Planning Framework}
\end{figure}

The rest of the paper is organized as follows: Section \ref{sec: Literature Review} reviews the related work. Section \ref{sec: Problem Statement} introduces the preliminaries and problem formulation. Section \ref{sec:Feedback planning} presents the proposed framework. Hardware experiments are reported in Section \ref{sec: Hardware Experiments}. Concluding remarks are in Section \ref{Conslusion and future work}.

\section{Related Work}
\label{sec: Literature Review}
An overview of the comparison between our work and the most related works is presented in Table \ref{tab:related-work}.

\subsection{Adaptive Control for Closed Architecture}

\textbf{Learning-based Adaptive Control:} \cite{ahanda2022adaptive,ahanda2022task,khan2024control} employ neural network–based outer-loop adaptation to approximate the kinematics, dynamics, and inner-loop controller. As summarized in Table~\ref{tab:related-work}, they make the weakest assumption about the inner loop, identical to ours, where $\Psi(\cdot)$ denotes an unknown structure possibly containing proportional, integral, or state-dependent terms. However, they rely on numerous tuning parameters and limited interpretability.

\textbf{Conventional Adaptive Control:} \cite{wang2019dynamic} introduces dynamic modularity with an adaptive controller under a moderate assumption of a known PI/PID inner-loop structure.

These studies address tracking control only, without safety considerations. In our experiments, we compare tracking performance with the learning-based adaptive control \cite{ahanda2022adaptive} due to the shared inner-loop assumption.

\begin{table*}[htbp]
\centering
\begin{threeparttable}
    \caption{Overview of our work's position in the literature}
    \label{tab:related-work}
    \begin{tabular}{ccccc}
        \toprule
        Methods & Tracking & Safety & \makecell{Assumption of \\ Inner-loop Controller} & Ref.\\
        \midrule
        \makecell{Learning-based\\ Adaptive Control} & \cmark  & \xmark  & \makecell{Weak\\ {$\tau = -K_d(\dot q-\dot q_d)+\Psi\!\left(q,\dot q,q_d,\int_0^t q(s)\,ds,\int_0^t q_d(s)\,ds\right)$}} & \cite{ahanda2022adaptive,ahanda2022task,khan2024control} \\
        \hline
        \makecell{Conventional\\ Adaptive Control} & \cmark & \xmark & \makecell{Moderate, known PID/PI structure} & \cite{wang2019dynamic} \\
        \hline
        Reduced-order CBF & - & \cmark & \makecell{Strong, known PD structure,\\ perfect tracking or\\ bounded error} & \cite{molnar2021model,singletary2021safety,molnar2023safety} \\ 
        \hline
        Robust CBF & - & \cmark & - & \cite{mrdjan2018robust, kolathaya2019input, alan2021safe, buch2022robust, nguyen2022robust, alan2023parameterized, wang2023disturbance, dacs2022robust, alan2023disturbance, zhao2020adaptive, sun2024safety, das2025robust, wang2024safety, cao2024safety, zhang2024eso, wang2023composite}\\
        \hline
        Ours & \cmark & \cmark & \makecell{Weak\\ {$\tau = -K_d(\dot q-\dot q_d)+\Psi\!\left(q,\dot q,q_d,\int_0^t q(s)\,ds,\int_0^t q_d(s)\,ds\right)$}} & - \\
        \bottomrule
        
    \end{tabular}
    \begin{tablenotes}
        \footnotesize
        \item[] {The two dashes for CBFs’ tracking indicate that we only compare our safety performance with them, as their nominal tracking controllers vary significantly across papers. The third dash for the robust CBF indicates that those works do not consider two-loop controllers.}
    \end{tablenotes}
\end{threeparttable}
\end{table*}

\subsection{Reduced-order Model-based CBF}
Many CBF-based safety approaches rely on high-fidelity dynamic models with torque control, yet obtaining such models remains challenging. Reduced-order or kinematic CBFs have been introduced to ensure safety with lower-fidelity models. For instance, \cite{molnar2021model} guarantees safety via safe velocity control but assumes a perfect inner-loop controller capable of instant tracking. Similarly, \cite{singletary2021safety,molnar2023safety} employ kinetic energy-based CBFs that offer partial robustness to model uncertainty but still depend on numerous dynamic parameters and a known PD-type inner loop. In contrast, our work addresses closed-architecture robots with minimal knowledge of the inner-loop controller and significant model uncertainty.

\subsection{Robust CBF}

Robust CBFs have recently been developed to handle model uncertainties and external disturbances \cite{mrdjan2018robust, kolathaya2019input, alan2021safe, buch2022robust, nguyen2022robust, alan2023parameterized, wang2023disturbance, alan2023disturbance, sun2024safety, das2025robust, wang2024safety, chen2023robust, zhao2020adaptive, dacs2022robust, zhang2024eso, wang2023composite, cao2024safety}. Among them, disturbance observer–based robust CBFs (DOB-CBFs) \cite{chen2023robust,zhao2020adaptive,dacs2022robust,zhang2024eso,wang2023composite,cao2024safety,wang2023disturbance,alan2023disturbance,sun2024safety,das2025robust,wang2024safety} enable active disturbance estimation without requiring worst-case bounds. Unlike these methods where estimation error bounds are absent or loosely derived via Lyapunov analysis, our approach obtains them directly from the observer error dynamics, resulting in tighter and less conservative bounds. Experimental comparisons with state-of-the-art methods (\emph{e.g.}, \cite{dacs2022robust}) further demonstrate this advantage. Recent work has explored ESO-based robust CBF design under output measurements \cite{chen2023robust}, focusing on safety for relative-degree-one systems. In contrast, this work considers safe tracking for robotic systems with higher relative degrees, where tracking and safety are addressed simultaneously.

\section{Preliminaries and Problem Formulation}
\label{sec: Problem Statement}
{\textbf{Notation Convention:} Throughout this paper, the superscript $^\star$ denotes the reference trajectory, while the subscript $_d$ denotes the kinematic command sent to the inner loop. For mathematical rigor, we unify the kinematic control command as the input vector $u$. A bar over a variable ($\bar{\cdot}$) represents a nominal parameter, and a hat ($\hat{\cdot}$) indicates an estimated value.}

\subsection{System Dynamics}
We consider a general robotic system with an Euler-Lagrange formulation incorporating a generalized external disturbance force $F_{ext}$: 
\begin{equation}
\begin{cases}

\label{eq:EL}
{M}\ddot{q}+{C}\dot{q}+{G}+F_r=\tau+ {J}^{T}F_{ext}\\
 y=\zeta (q)
\end{cases},
\end{equation}
where $q, \dot{q}, \ddot{q} \in \mathbb{R}^n$ are the joint position, velocity, and acceleration, respectively. ${M} \in \mathbb{R}^{n\times n}$ is the symmetric positive-definite inertia matrix; ${C}\in \mathbb{R}^{n\times n}$ denotes Coriolis and centrifugal term; ${G} \in \mathbb{R}^{n}$ is the gravity term; $F_r \in \mathbb{R}^{n}$ accounts for the friction torque; $\tau \in \mathbb{R}^{n}$ represents the torque (provided by the inner-loop controller); ${J} \in \mathbb{R}^{6\times n}$ is the Jacobian matrix; and $F_{ext} \in \mathbb{R}^{6}$ represents a generalized external disturbance force. The output  $y \in \mathbb{R}^{m}$, and $\zeta:\mathbb{R}^{n} \to\ \mathbb{R}^{m}$ is the mapping from joint space to task space.

The most common inner-loop controller for robot manipulators typically consists of a PID controller or PD controller, often combined with dynamic compensation, such as gravity. For stability, most inner-loop controllers include a derivative term for damping, $(\dot{q}-\dot{q}_d)$, where $\dot{q}_d$ is a joint velocity command.

\begin{assumption}
   We assume the inner-loop controller has the following structure:
\begin{equation}
\label{eq:inner}
{\tau = -K_d(\dot q-\dot q_d)
+\Psi\!\left(q,\dot q,q_d,
\int_0^t q(s)\,ds,
\int_0^t q_d(s)\,ds\right)},
\end{equation}
with $K_d$ a derivative control gain, and $\Psi(\cdot)$ is a general, unknown function that may include proportional, integral, derivative, and dynamic compensation terms. To the best of our knowledge, this assumption is the most general in the literature (see Table \ref{tab:related-work}), requiring minimal knowledge of the inner-loop controller, similar to \cite{ahanda2022adaptive}, and more general than \cite{wang2019dynamic}.
\end{assumption}

\begin{assumption}
\label{assump}
    The kinematics of the system are assumed to be known for control in task space.
\end{assumption}

The system dynamic model combining \eqref{eq:EL} and \eqref{eq:inner} is:
\begin{equation}
\label{eq:SECOND}
\begin{split}
    \ddot{q}  &=  {M}^{-1} \Bigl(-{C} \dot{q} -{G}-F_r+{J}^{T}F_{ext}-{K}_d \Dot{q}+\Psi(\cdot)  + {K}_d \dot{q}_d\Bigr)
    \end{split}.
\end{equation}
We have removed torque control, as it is not accessible in a closed architecture. {To explicitly build the control problem from an input-output viewpoint, we define the system state as $x \triangleq [q^\top,\dot q^\top]^\top \in \mathcal{X} \subset \mathbb{R}^{2n}$ and the kinematic control command manipulated in the outer-loop layer as the control input $u \triangleq \dot{q}_d \in \mathcal{U} \subset \mathbb{R}^n$ from the outset.}  Another reason for choosing $\dot{q}_d$ instead of $q_d$ as the control input is to avoid differentiation-induced noise.
\subsection{Robot Kinematic Control with Known Model}
Neglecting $F_r$ and $F_{ext}$ in \eqref{eq:EL}, the exact system dynamics is used to design the control law:
\begin{equation}
\label{eq: control law}
\begin{split}
    {u_{\text{ideal}}}  &= {K}_d^{-1} \Bigl( {M}u_0 
    +{C} \dot{q} +{G}+{K}_d \Dot{q}-\Psi(\cdot)  \Bigr)
    \end{split},
\end{equation}
where the position control input $u_0$ is defined as follows:
\begin{equation}
\label{eq: control types}
\begin{split}
u_0 &=\ddot{q}^\star+k_p(q^\star-q)+k_d(\dot{q}^\star-\dot{q})\\
    \end{split},
\end{equation}
where $\ddot{q}^\star, \dot{q}^\star, q^\star \in \mathbb{R}^{n}$ represent the desired acceleration, velocity, and position of a reference joint trajectory generated by an upstream planner. $k_{p}\in \mathbb{R}^{n \times n}$ and $k_{d}\in \mathbb{R}^{n \times n}$ are the proportional and derivative gain matrices, respectively. $\ddot{q}^\star$ serves only as a feedforward term, while $u_0$ is a position  controller. By substituting \eqref{eq: control law} and \eqref{eq: control types} into \eqref{eq:SECOND}, we get: 
\begin{equation}
\label{eq: final system}
    \ddot{q}  = \ddot{q}^\star+k_p(q^\star-q)+k_d(\dot{q}^\star-\dot{q}).
\end{equation}
Proper $k_p$ and $k_d$ make ${q}$ converge to ${q}^\star$ asymptotically.

\subsection{Control Barrier Functions}
The system \eqref{eq:SECOND} can be rewritten as a nonlinear control-affine system when ignoring the disturbance vector \(F_{ext}\):
\begin{equation}\label{eq:affine}
    \dot{x}=\psi(x)+g(x)u,
\end{equation}
where $x \triangleq [q,\dot{q}]^T \in \mathcal{X} \subset \mathbb{R}^{2n}$, $\psi : \mathbb{R}^{2n} \rightarrow \mathbb{R}^{2n}$ and $g : \mathbb{R}^{2n} \rightarrow \mathbb{R}^{2n \times n}$ are Lipschitz continuous, and {$u \in \mathcal{U} \subset \mathbb{R}^n$ is the} control input vector.

The safety of system \eqref{eq:affine} can be guaranteed using a safety set. 
A set $\mathcal{C}$ is considered to be a safety set if it is forward invariant in the state space $\mathcal{X}$, \emph{i.e.}, for system \eqref{eq:affine} if solutions for some $u \in \mathcal{U}$ starting at any initial safe condition $x(0)  \in \mathcal{C}$ satisfy $x(t)  \in \mathcal{C}$, $\forall t \ge 0$.
The safety set $\mathcal{C}$ is defined as a 0-superlevel set of a continuously-differential function \(h(x): \mathbb{R}^{2n} \rightarrow \mathbb{R}\) as:

\begin{equation}
    \mathcal{C} = \{x \in \mathbb{R}^{2n}~|~h(x) \ge 0\}.
\end{equation}

The function \(h\) is used to synthesize a controller with safety guarantees via a CBF. 

\begin{definition}
    (\emph{Exponential CBF (ECBF)} \cite{nguyen2016exponential, xiao2019control}) Consider system \eqref{eq:affine} with relative degree \(r\) for an \(r\)-times continuously differentiable function \(h\), \emph{i.e.}, \(L_gL_{\psi}h(x)=\cdots=L_gL_{\psi}^{r-2}h(x)=0\) and \(L_gL_{\psi}^{r-1}h(x)\neq 0, \forall x\in \mathcal{C}\). \(h(x)\) is an ECBF if there exists a row vector \(K_a\in\mathbb{R}^r\) satisfying \(\forall x\in\mathcal{C}\)
    \begin{equation}
            \sup\limits_{u \in \mathcal{U}} \big( L_{\psi}^r h(x)  +L_g L_{\psi}^{r-1}h(x)u \big) \geq -K_a \eta_b(x), 
    \end{equation}
\end{definition}
where \(\eta_b(x)=[h(x), \dot{h}(x), \cdots, h^{(r-1)}(x)]^T\), \(K_a=[k_1, \cdots, k_r]\), and the values of \(k_1, \cdots, k_r\) satisfy specific properties given in \cite{nguyen2016exponential, xiao2019control}. 

To guarantee safety, the control problem is formulated as a quadratic program (QP) with a CBF as a hard constraint \cite{ames2016control}:

\begin{equation}
\label{eq:QP-CBF}
\begin{split}
    u^* (x) & = \operatorname*{arg\,min}_{u\in U} \|u-k(x)\|^2 \\
    \text{s.t.} 
    \quad L_{\psi}^r h(x) & +L_g L_{\psi}^{r-1}h(x)u \geq -K_a \eta_b(x),
\end{split}
\end{equation}
where \(k(x)\) is a nominal control law.

\textbf{Research Objective:} Given the system in \eqref{eq:SECOND}, the nominal dynamics with limited knowledge of the system and inner-loop controller are defined as  
\begin{equation}\label{eq:nominal model}
\ddot{q} = \bar{M}^{-1}\!\left(-\bar{C}\dot{q} - \bar{G} + \bar{K}_d {u}\right),
\end{equation}
where $\bar{M}$, $\bar{C}$, and $\bar{G}$ are nominal parameters, and $\bar{K}_d$ is the nominal inner-loop gain.  
The goal is to design a {$u$} controller that achieves accurate trajectory tracking and task-space safety (\emph{e.g.}, collision avoidance) under an unknown inner-loop controller, uncertain dynamics, and external disturbances.

\begin{rem}
 $\Bar{C}$ and $\Bar{G}$ can be a zero matrix and a zero vector, respectively, with minimal knowledge of the system. We use $\Bar{C}$ and $\Bar{G}$ here to make the formulation more general. The model \eqref{eq:nominal model} can be further reduced to:
\end{rem}
 \begin{equation}
 \label{eq:reduced nominal model}
     \ddot{q} = \Bar{M}^{-1} \Bar{K}_d {u}.
 \end{equation}
We use the nominal model \eqref{eq:nominal model} for general mathematical derivation, whereas the reduced nominal model \eqref{eq:reduced nominal model} is employed in hardware experiments to evaluate the tracking performance and safety under limited information.

\section{Proposed Framework}
\label{sec:Feedback planning}

\subsection{Robot Kinematic Control with Nominal Model}

The control law in \eqref{eq: control law} is ideal, assuming known dynamics and an inner-loop controller. To handle uncertainties, we propose the following practical law based on nominal dynamics:

\begin{equation}
\label{eq:nominal control law}
\begin{split}
    {u_{\text{nominal}}}  &= \Bar{K}_d^{-1} \Bigl( \Bar{M}u_0 
    +\Bar{C} \dot{q} +\Bar{G}  \Bigr).
    \end{split}
\end{equation}

The following part shows the dynamics of the closed-loop system with our nominal control law \eqref{eq:nominal control law}. We define the parametric discrepancies as:
\begin{equation}
\label{eq: difference}
\begin{split}
    \Delta K_d &\triangleq K_d\Bar{K}_d^{-1},\\
    \Delta {M} &\triangleq \Delta K_d\bar{M}-{M},\\
    \Delta{C} &\triangleq \Delta K_d\bar{C}-{C},\\
    \Delta {G} &\triangleq \Delta K_d\Bar{G}- {G}.
    \end{split}
\end{equation}

By rearranging \eqref{eq:SECOND}, we get:
\begin{equation}
\label{eq: derive1}
\begin{split}
&{M}\ddot{q}+{C}\dot{q}+{G}+F_r\\
    &=-K_d (\dot{q}-{u})+\Psi(\cdot)+ {J}^{T}F_{ext}. \\
    \end{split}
\end{equation}

Substituting {$u=u_{\text{nominal}}$} and \eqref{eq: difference} into \eqref{eq: derive1}, the right-hand side of \eqref{eq: derive1} becomes:

\begin{equation}
\label{eq: derive right hand}
\begin{split}
     &-K_d \dot{q}+K_d\Bar{K}_d^{-1} \Bigl( \Bar{M}u_0
    +\Bar{C} \dot{q} +\Bar{G}  \Bigr)+\Psi(\cdot)+ {J}^{T}F_{ext}\\
    &=-K_d \dot{q}+\Delta K_d \Bigl( \Bar{M}u_0
    +\Bar{C} \dot{q} +\Bar{G}  \Bigr)+\Psi(\cdot)+ {J}^{T}F_{ext}.\\
    \end{split}
\end{equation}

The left-hand side of  \eqref{eq: derive1} becomes:
\begin{equation}
\label{eq: derive left hand}
\begin{split}
    &\Bigl( \Delta K_d\bar{M}-\Delta {M}\Bigr)\ddot{q}+\Bigl(\Delta K_d\bar{C}-\Delta{C}\Bigr)\dot{q}\\
    &+\Bigl( \Delta K_d\Bar{G}-\Delta {G}\Bigr)+F_r.
    \end{split}
\end{equation}

The resulting closed-loop system is:
\begin{equation}
\label{eq: nominal final system}
\begin{split}
    \ddot{q}  = \ddot{q}^\star+k_p(q^\star-q)+k_d(\dot{q}^\star-\dot{q})+f.
    \end{split}
\end{equation}
where 
\begin{equation}
\label{eq:f}
\begin{split}
    f=(\Delta K_d\bar{M})^{-1}\Bigl(&\Delta {M}\ddot{q}+\Delta{C}\dot{q}+\Delta {G}- F_r\\
    &+\Psi(\cdot)+{J}^{T}F_{ext}-K_d \dot{q}  \Bigr).
\end{split}
\end{equation}

Let $f$ denote the total disturbance, including internal uncertainties and external disturbances. Compared with \eqref{eq: final system}, $f$ degrades system performance. To counter this, an ESO \cite{chen2022relationship} is employed to estimate $f$ for disturbance rejection.

\subsection{Extended State Observer Design}\label{Sec:ESO}

For the \(i\)-th joint of the robotic manipulator \eqref{eq:nominal model}, we consider the following subsystem:
\begin{equation}\label{eq:ESO1}
    \begin{cases}
    \begin{array}{l}
         \dot{x}_{1i} = x_{2i}  \\
         \dot{x}_{2i} = F_i(x)+G_i(x){u_i}+f_i,
    \end{array}
    \end{cases}
\end{equation}
where \(x_{1i}=q_i\); {\(F_i(x)\), \(G_i(x)\), and \(f_i\) are the \(i\)-th row of \(F=\Bar{M}^{-1} (-\Bar{C} \dot{q} -\Bar{G}) \in \mathbb{R}^n\), \(G=\Bar{M}^{-1}\Bar{K}_d  \in \mathbb{R}^n\), and \(f\), respectively.}

By treating \(f_i\) as an extended state, the augmented system is given by:
\begin{equation}
\label{eq:augmented state space}
\begin{cases}
  \begin{array}{l}
\dot{x}_{1i}=x_{2i} \\
\dot{x}_{2i}=F_i(x)+G_i(x){u_i}+x_{3i}\\
\dot{x}_{3i}=\dot{f}_i.
  \end{array}
\end{cases}
\end{equation}

In this work, all model terms are evaluated using observer states to ensure implementability and improved noise attenuation. Specifically, let \(\hat{x}_i = [\hat{x}_{1i}, \hat{x}_{2i}, \hat{x}_{3i}]^\top\) denote the observer state. Then, a third-order ESO is designed as:
\begin{equation}
\label{eq: ESO}
\begin{cases}
  \begin{array}{l}
\dot{\hat{x}}_{1i}=\hat{x}_{2i}+\beta_{1i}(x_{1i}-\hat{x}_{1i})\\
\dot{\hat{x}}_{2i}=F_i(\hat{x})+G_i(\hat{x}){u_i}+\hat{x}_{3i}+\beta_{2i}(x_{1i}-\hat{x}_{1i})\\
\dot{\hat{x}}_{3i}=\beta_{3i}(x_{1i}-\hat{x}_{1i}),
  \end{array}
\end{cases}
\end{equation}
where \(\hat{x}_{1i}\), \(\hat{x}_{2i}\), and \(\hat{x}_{3i}\) denote the estimates of position, velocity, and disturbance, respectively.

Define the estimation error as
\[
\vartheta_i=
\begin{bmatrix}
x_{1i}-\hat{x}_{1i}\\
x_{2i}-\hat{x}_{2i}\\
x_{3i}-\hat{x}_{3i}
\end{bmatrix}.
\]

Due to the use of estimated states in \(F_i(\cdot)\) and \(G_i(\cdot)\), a mismatch arises between the true and observer dynamics. In this work, such mismatch is lumped into the disturbance term, leading to an equivalent augmented disturbance that remains bounded with bounded derivative. Under this standard lumped-disturbance treatment, the estimation error dynamics can be expressed as:
\begin{equation}\label{eq: error dynamic}
\dot{\vartheta}_i = A_i \vartheta_i + E_i \dot{f}_i,
\end{equation}
where
\[
A_i=
\begin{bmatrix}
-\beta_{1i} & 1 & 0\\
-\beta_{2i} & 0 & 1\\
-\beta_{3i} & 0 & 0
\end{bmatrix},\quad
E_i=
\begin{bmatrix}
0\\
0\\
1
\end{bmatrix}.
\]

The observer gains are selected such that the eigenvalues of \(A_i\) are placed at \(-\omega_{o_i}\), i.e., the characteristic polynomial is \((s+\omega_{o_i})^3\). Accordingly, the gains are chosen as \(\beta_{1i}=3\omega_{o_i}\), \(\beta_{2i}=3\omega_{o_i}^2\), and \(\beta_{3i}=\omega_{o_i}^3\), where \(\omega_{o_i}>0\) denotes the observer bandwidth \cite{gao2003scaling}. In the subsequent controller and CBF design, \(x\) denotes the system state for analysis, while its estimated value \(\hat{x}\) is used in implementation; the resulting mismatch is absorbed into the lumped disturbance

\subsection{Controller Design}
$\hat{f}$ can be used to develop a disturbance rejection controller that complements the control law \eqref{eq:nominal control law}:
\begin{equation}
\label{eq:overall control law}
\begin{split}
    &{u_{\text{total}} = {u_{\text{nominal}}} + u_{f}} ,\\
    &{u_{\text{nominal}}}  = \Bar{K}_d^{-1} \Bigl( \Bar{M}u_0 
    +\Bar{C} \dot{q} +\Bar{G}  \Bigr) ,\\
    &{u_{f}}=-\Bar{K}_d^{-1}\bar{M} \hat{f} ,
    \end{split}
\end{equation}
where {$u_{\text{nominal}}$} is the nominal control from \eqref{eq:nominal control law}, {$u_{f}$} rejects disturbance, and they are summed to get {the total control input $u_{\text{total}}$}. 


\subsection{ Controller Stability Analysis}
\begin{assumption}\label{asm1}
There exists a positive known constant \(l_f\) such that for any \(x\in \mathcal{X}\), \(u\in \mathcal{U}\), and \(t\geq0\), the following inequality holds:
    \begin{equation}\label{eq_6}
        \left|\dfrac{\partial f_i(t,x,u)}{\partial t}\right|\leq l_f. 
    \end{equation}
\end{assumption}
Assumption \ref{asm1} implies that \(f\) is Lipschitz continuous with respect to time, and $\dot{f}$ is bounded. 
\begin{theorem}
\label{lemma}
Given the robotic system \eqref{eq:SECOND} with kinematic control input, the ESO \eqref{eq: ESO} with appropriate observer bandwidth \(\omega_{o_i}\) for each joint, and the nominal control law \eqref{eq:overall control law} with appropriate tuned \(k_p\) and \(k_d\), {the closed-loop system is uniformly ultimately bounded (UUB). Specifically, the tracking error $\epsilon$ converges to an ultimate bound that is inversely proportional to the observer bandwidth \(\omega_{o_i}\).}

\end{theorem}
\begin{proof}
Since $A_i$ is Hurwitz, we have  the following decomposition \cite[Chapter 1.3]{perko2013differential}: 
\begin{equation}\label{eq: decomposition}
e^{A_it}=P_ie^{\Lambda_i t}P_i^{-1},
\end{equation}
where $P_i$ is a matrix whose columns are the eigenvectors of 
$A_i$, and ${\Lambda_i}$ is a diagonal matrix with the corresponding eigenvalues in the diagonal elements.

Therefore, there exists a constant $c_{1_i}$ such that \cite[Chapter 1.9]{perko2013differential}
\begin{equation}\label{eq: inequality}
\left\lVert e^{A_it}\right\rVert \leq \left\lVert P_i\right\rVert \left\lVert e^{\Lambda _it}\right\rVert\left\lVert P_i^{-1}\right\rVert =c_{1_i}\left\lVert e^{\Lambda_i t}\right\rVert \leq c_{1_i}\left\lVert e^{a_it}\right\rVert, \end{equation}
where {$a_i$} is the $\lambda_{max} (\Lambda_i)$, the maximum eigenvalue of $\Lambda_i$, which, in our case, is $-\omega_{o_i}$.
 
The solution to the error dynamics in \eqref{eq: error dynamic} is given by:
\begin{equation}\label{eq: solution}
\vartheta_i = e^{A_i(t-t_0)}\vartheta_i(t_0) + \int_{t_0}^{t} e^{A_i(t-\tau)}E_i\dot{f}_i(\tau) \, d\tau,
\end{equation}
where $t_0$ is the initial time. Substituting \eqref{eq: inequality} into \eqref{eq: solution}, we obtain:
\begin{equation}\label{eq: solution relax}
\begin{array}{ll}
\left\lVert \vartheta_i \right\rVert \ &\leq c_{1_i}\left\lVert e^{-\omega_{o_i}(t-t_0)}\right\rVert \left\lVert \vartheta_i(t_0)\right\rVert\\  
&+ \int_{t_0}^{t} c_{1_i}\left\lVert e^{-\omega_{o_i}(t-\tau)}\right\rVert \left\lVert E_i\dot{f}_i(\tau)\right\rVert \, d\tau\\
&\leq c_{1_i}\left\lVert e^{-\omega_{o_i}(t-t_0)}\right\rVert \left\lVert \vartheta_i(t_0)\right\rVert\\
&+ \frac{c_{1_i}}{\omega_{o_i}} \sup_{\tau\in (t_0, t)} \left\lVert E_i\dot{f}_i(\tau)\right\rVert \ .\\
\end{array}
\end{equation}
Since $\dot{f}_i$ is bounded, $e^{-\omega_{o_i}(t-t_0)}$ decays exponentially, the estimation error $\left\lVert \vartheta_i \right\rVert \ \leq \sigma_i$ is bounded. 

Thus, the disturbance estimation error satisfies:
\begin{equation}\label{eq: estimation error}
    \left\lVert\Tilde{f}\right\rVert=\left\lVert f-\hat{f} \right\rVert \leq \sigma.
\end{equation}

For the $i$-th joint subsystem, substituting the control law \eqref{eq:overall control law} into \eqref{eq:SECOND} results in the following \(i\)-th closed-loop subsystem dynamics:
\begin{equation}\label{eq: error function}
    \ddot{q}_i  = \ddot{q}_i^\star+k_{p_i}(q_i^\star-q_i)+k_{d_i}(\dot{q}_i^\star-\dot{q}_i)+\Tilde{f}_i ,
\end{equation}
where $\Tilde{f}_i=f_i-\hat{f}_i$ represents the disturbance estimation error. The error dynamic can be expressed as:
\begin{equation}\label{eq: control error dynamic}
    \dot{\epsilon}=H\epsilon+ \xi,
\end{equation}
where $\epsilon=\begin{bmatrix}
        q_i-q_i^\star \\ \dot{q}_i-\dot{q}_i^\star\\
    \end{bmatrix}$, $H=\begin{bmatrix}
        0& 1 \\ -k_{p_i}& -k_{d_i}\\
    \end{bmatrix}$, and $\xi=\begin{bmatrix}
        0 \\ \Tilde{f}_i\\
    \end{bmatrix}$.

The matrix $H$ can be designed to be Hurwitz. Choose $V=\frac{1}{2}\epsilon^TQ\epsilon$ as the Lyapunov function candidate. Its derivative along \eqref{eq: control error dynamic} is given by:
\begin{equation}\label{eq: V dot}
\dot{V} = \tfrac{1}{2}(\dot{\epsilon}^T Q \epsilon + \epsilon^T Q \dot{\epsilon}) = -\|\epsilon\|^2 + \xi^T Q \epsilon.
\end{equation}
By applying the Cauchy–Schwarz inequality, we obtain:
\begin{equation}\label{eq: V dot ieq}
\begin{array}{ll}
\dot{V}& \leq -\left\lVert \epsilon \right\rVert^2+ \left\lVert \xi \right\rVert \left\lVert Q \right\rVert \left\lVert\epsilon \right\rVert.\\
\end{array}
\end{equation}
Given that 
$Q$ is positive definite, $\left\lVert Q \right\rVert \leq \lambda_{max} (Q)$, and the $\xi$ is bounded by $\sigma$,  the inequality \eqref{eq: V dot ieq} becomes:
\begin{equation}\label{eq: V dot ieq futher}
\begin{array}{ll}
\dot{V}& \leq -\left\lVert \epsilon \right\rVert^2+ \sigma \lambda_{max} (Q)\left\lVert\epsilon \right\rVert.
\end{array}
\end{equation}
Next, using the inequality:
\begin{equation}
\begin{array}{ll}
\sigma \lambda_{max} (Q)\left\lVert\epsilon \right\rVert \leq \frac{1}{2}\left\lVert \epsilon \right\rVert^2+ \frac{1}{2}\sigma^2 \lambda^2_{max} (Q),
\end{array}
\end{equation}
\eqref{eq: V dot ieq futher} can be rewritten as:
\begin{equation}\label{eq: V dot ieq final}
\begin{array}{ll}
\dot{V}& \leq -\frac{1}{2}\left\lVert \epsilon \right\rVert^2+ \frac{1}{2}\sigma^2 \lambda^2_{max} (Q)\\
& \leq -\frac{V}{\lambda_{max} (Q)} +\frac{1}{2}\sigma^2 \lambda^2_{max} (Q).
\end{array}
\end{equation}
Solving the inequality \eqref{eq: V dot ieq final} yields:
\begin{equation}\label{eq: V dot solve}
\begin{array}{ll}
{V}& \leq (v(0)-\frac{\kappa}{\eta})e^{-\eta t}+\frac{\kappa}{\eta},
\end{array}
\end{equation} 
where $\eta=\frac{1}{\lambda_{max} (Q)}$ and $\kappa=\frac{1}{2}\sigma^2 \lambda^2_{max} (Q)$.
{From \eqref{eq: V dot solve}, as $t \to \infty$, the Lyapunov function $V$ is bounded by $\frac{\kappa}{\eta}$. This implies that the tracking error $\epsilon$ is uniformly ultimately bounded. In practical implementations, selecting a sufficiently large finite bandwidth ensures that the disturbance estimation error $\sigma$ remains small, thereby bounding the closed-loop tracking error. \hfill $\square$}
\end{proof}

\subsection{Robust High-order CBF for Safe Control}\label{ESOR-QP}
 
By adding the disturbance term, system \eqref{eq:nominal model} becomes:
\begin{equation}\label{eq:RCBF_dynamics}
    \dot{x}=\psi(x)+g(x)u+\bar{g}f,
\end{equation}
where \(f=[f_1, \cdots, f_n]^T \in \mathcal{F} \subset \mathbb{R}^n\) is a total disturbance vector in each input channel, \(\bar{g}=[\mathbf{0}_{n\times n}, \mathbf{I}_{n\times n}]^T\), \(\mathbf{0}_{n\times n}\) and \(\mathbf{I}_{n\times n}\) are \(n\times n\) zero and identity matrices, respectively. Note that \(u\) and \(f\) are in the same channel, \emph{i.e.}, the \(n\times n\) matrix of the upper part of \(g(x)\) is a zero matrix and each row in the lower part of \(g(x)\) includes an independent input, so \(f\) is a matched disturbance vector. 

To define safety sets for system \eqref{eq:RCBF_dynamics}, we consider an \(r\)-times continuously differentiable function \(h(x)\), where the relative degrees of \(h(x)\) with respect to both \(u\) and \(f\) are \(r\), given that \(f\) is matched.
A series of functions are defined as follows:
\begin{equation}
\label{eq:generalcbf}
\begin{array}{ll}
h_0(x)= & h(x),\\
h_1(x)= & \dot{h}_0(x)+\gamma_1 h_0(x),\\
 & \vdots \\
h_r(x)= & \dot{h}_{r-1}(x)+\gamma_{r} h_{r-1}(x),
\end{array}
\end{equation}
where \(\gamma_1, \cdots, \gamma_r\) are positive constants. The corresponding series of safety sets are
\begin{equation}\label{eq:generalSafetySets}
    \begin{array}{ll}
        \mathcal{C}_0= & \{x\in\mathbb{R}^{2n}: h_0(x)\geq 0\}, \\
        \mathcal{C}_1= & \{x\in\mathbb{R}^{2n}: h_1(x)\geq 0\}, \\
         & \vdots \\
        \mathcal{C}_r= & \{x\in\mathbb{R}^{2n}: h_r(x)\geq 0\}.
    \end{array}
\end{equation}


\begin{theorem}\label{Th_RCBFwithf}
    For system \eqref{eq:RCBF_dynamics}, the relative degrees of \(h(x)\) with respect to both the input \(u\) and the disturbance \(f\) are \(r\), and \(f\) is known. If the initial states satisfy \(x(0)\in \mathcal{C}_s= \cap_{j=0}^r\mathcal{C}_j\), then any Lipschitz continuous controller \(u(x)\in K_{\text{cbf}}(t,x,f)\) renders the set \(\mathcal{C}_s\) forward invariant for system \eqref{eq:RCBF_dynamics}, where 
    \begin{equation}\label{eq:RCBFUSetF}
        \begin{array}{r@{}l}
            K_{\text{cbf}}(t,x,f)\triangleq & \{ u\in\mathcal{U}: L_{\psi}^{r}h(x) + L_g L_{\psi}^{r-1} h(x)u\\
            & + L_{\bar{g}} L_{\psi}^{r-1} h(x)f + \sum\limits_{j=0}^{r-1}k_{j} h^{(j)}(x) \geq 0\}, 
        \end{array}
    \end{equation}
   where \(h^{(j)}(x)\) is the \(j\)th time derivative of \(h(x)\), and \(k_{j}, j=0, \cdots, r-1\), are the coefficients of polynomial \(s^r+k_{r-1}s^{r-1}+\cdots+k_{0}\) with roots at \(-\gamma_1, \cdots, -\gamma_r\).
\end{theorem}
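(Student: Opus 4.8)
The plan is to reduce the claim to a cascade of one-dimensional comparison-lemma arguments on the functions $h_0, h_1, \dots, h_r$, exploiting the chain structure in \eqref{eq:generalcbf}. First I would observe that, because the relative degree of $h(x)$ with respect to both $u$ and $f$ is $r$, the first $r-1$ derivatives $\dot h_0, \ddot h_0, \dots$ are independent of $u$ and $f$, so each $h_j(x)$ for $j\le r-1$ is a well-defined function of $x$ only, and $h_r(x)$ is the first expression in the chain that picks up the control and the disturbance. A direct expansion of the recursion shows that $h_r(x) = \sum_{j=0}^{r-1} k_j h^{(j)}(x) + h^{(r)}(x)$, where $h^{(r)}(x) = L_\psi^r h(x) + L_g L_\psi^{r-1} h(x)\,u + L_{\bar g} L_\psi^{r-1} h(x)\,f$ along trajectories of \eqref{eq:RCBF_dynamics}; hence membership $u \in K_{\text{cbf}}(t,x,f)$ is exactly the statement $h_r(x) \ge 0$ along the closed-loop trajectory.

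Next I would run the forward-invariance argument from the innermost set outward. Since $x(0)\in\mathcal{C}_s=\cap_{j=0}^r\mathcal{C}_j$, we have $h_r(x(0))\ge 0$; and for any Lipschitz $u\in K_{\text{cbf}}(t,x,f)$ the constraint guarantees $h_r(x(t))\ge 0$ for all $t\ge 0$ for which the solution exists — this is immediate because the constraint is literally $h_r\ge 0$, and one only needs the Lipschitz property to ensure a unique solution and to invoke Nagumo/comparison-type reasoning cleanly. Then I would propagate this up the chain: from $h_r = \dot h_{r-1} + \gamma_r h_{r-1} \ge 0$ and $h_{r-1}(x(0))\ge 0$, the scalar comparison lemma (applied to $y(t)=h_{r-1}(x(t))$, which satisfies $\dot y \ge -\gamma_r y$) gives $h_{r-1}(x(t)) \ge h_{r-1}(x(0)) e^{-\gamma_r t} \ge 0$. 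Iterating this from $j=r-1$ down to $j=0$, using $h_j = \dot h_{j-1} + \gamma_j h_{j-1}$ and $h_{j-1}(x(0))\ge 0$ at each stage, yields $h_j(x(t))\ge 0$ for every $j$ and all $t\ge 0$; in particular $h(x(t)) = h_0(x(t)) \ge 0$, so $x(t)\in\mathcal{C}_0$, and in fact $x(t)\in\mathcal{C}_s$, establishing forward invariance of $\mathcal{C}_s$.

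The one bookkeeping point I would make explicit is the identification of the coefficients: expanding $h_r = (\tfrac{d}{dt}+\gamma_r)(\tfrac{d}{dt}+\gamma_{r-1})\cdots(\tfrac{d}{dt}+\gamma_1) h_0$ shows the differential operator factors as $\prod_{i=1}^r(\tfrac{d}{dt}+\gamma_i)$, whose expansion is the polynomial $s^r + k_{r-1}s^{r-1} + \cdots + k_0$ with roots $-\gamma_1,\dots,-\gamma_r$; this is precisely why the term $\sum_{j=0}^{r-1} k_j h^{(j)}(x)$ appears in \eqref{eq:RCBFUSetF} rather than the raw $\gamma_i$'s, and it is the algebraic glue connecting the constraint set to the nested sets. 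I expect the main obstacle to be handled at the level of rigor rather than difficulty: one must be careful that each $y(t)=h_j(x(t))$ is absolutely continuous (it is, since $x(\cdot)$ is a Lipschitz-closed-loop solution and $h$ is $C^r$) so that the comparison lemma applies, and one must argue that the solution does not escape $\mathcal{X}$ in finite time — or else state the result on the maximal interval of existence. The disturbance $f$ entering the picture changes nothing structurally because it is matched and shares relative degree $r$ with $u$, so it only shifts the term $L_{\bar g}L_\psi^{r-1}h(x)f$ into the known part of the constraint; the cascade argument is verbatim the nominal ECBF proof with this extra known term folded in.
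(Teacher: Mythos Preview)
Your proposal is correct and follows essentially the same approach as the paper: expand the recursion to identify the constraint in \eqref{eq:RCBFUSetF} with $h_r\ge 0$ via the operator factorization $\prod_i(\tfrac{d}{dt}+\gamma_i)$, then cascade nonnegativity from $h_r$ down to $h_0$ using the chain $\dot h_{j-1}+\gamma_j h_{j-1}\ge 0$. The only cosmetic difference is that the paper invokes Nagumo's theorem at each boundary $\partial\mathcal{C}_{j-1}$, whereas you use the scalar comparison lemma to obtain the slightly sharper bound $h_{j-1}(x(t))\ge h_{j-1}(x(0))e^{-\gamma_j t}$; both are standard and equivalent here.
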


\begin{proof}
    From \eqref{eq:generalcbf}, \(h_r\) can be written as
    \begin{equation}\label{eq:h_r}
        \begin{array}{r@{}l}
            h_r(x)= & (s+\gamma_r)(s+\gamma_{r-1})\cdots(s+\gamma_1)h(x) \\
            = & s^r h(x) + k_{r-1}s^{r-1}h(x) + \cdots + k_0 h(x),
        \end{array}
    \end{equation}
    where \(s\triangleq \frac{d}{dt}\). Since the control input \(u\) and disturbance \(f\) both have the same relative degree of \(r\), \(u\) and \(f\) do not explicitly show in \(h^{(j)}(x), 0\leq j\leq r-1\) until 
    \begin{equation}\label{eq:h_derivative}
        h^{(r)}(x)=L_{\psi}^{r}h(x) + L_g L_{\psi}^{r-1} h(x)u + L_{\bar{g}} L_{\psi}^{r-1} h(x)f. 
    \end{equation}
    By substituting \eqref{eq:h_derivative} into \eqref{eq:h_r} and comparing it with \eqref{eq:RCBFUSetF}, the control values in \(K_{\text{cbf}}(t,x,f)\) guarantee \(h_r(x)\geq 0\), \(\forall t>0\). From the last equation in \eqref{eq:generalcbf}, we have \(\dot{h}_{r-1}(x)+\gamma_r h_{r-1}(x)\geq 0\). Then \(\dot{h}_{r-1}(x)\geq 0\) for any \(x\in\partial \mathcal{C}_{r-1}\). According to Nagumo's theorem \cite{blanchini2008set}, since \(x(0)\in \mathcal{C}_{r-1}\), we have \(h_{r-1}(x)\geq 0\), \(\forall t>0\). Iteratively, \(\dot{h}_0(x)+\gamma_1 h_0(x)\geq 0\), and we have \(h_0(x)\geq 0\), \(\forall t>0\), as \(x(0)\in \mathcal{C}_0\). Therefore, the set \(\mathcal{C}_s\) is forward invariant for system \eqref{eq:RCBF_dynamics}.\hfill $\square$
\end{proof}

However, the disturbance vector \(f\) is not available in practice to enforce the inequality in \eqref{eq:RCBFUSetF}. The estimated disturbance vector \(\hat{f}\) obtained in Subsection \ref{Sec:ESO} is used to devise our robust CBF. Then, \eqref{eq:RCBFUSetF} can be reformulated as: 
\begin{equation}\label{eq:RCBFUSetFhat}
    \begin{array}{r@{}l}
        K_{\text{cbf}}(t,x,f,\hat{f}) \triangleq & \{u\in \mathcal{U}: L_{\psi}^r h(x) + L_g L_{\psi}^{r-1} h(x)u \\
         & + L_{\bar{g}} L_{\psi}^{r-1} h(x)\hat{f} - L_{\bar{g}} L_{\psi}^{r-1} h(x)(\hat{f}-f) \\
         & +\sum\limits_{j=0}^{r-1}k_j h^{(j)}(x) \geq 0\}. 
    \end{array}
\end{equation}
Since the disturbance estimation error \(\hat{f}-f\) is unknown in \eqref{eq:RCBFUSetFhat}, the following assumption is made for the disturbance estimation error bound of ESO. 


The estimation error of \(\hat{f}_i\) is as follows \cite{chen2022relationship}:
\begin{equation}\label{eq_18}
    f_i(k)-\hat{f}_i(k)=p(k)* \Delta f_i(k),
\end{equation}
where \(*\) represents convolution and 
\begin{equation}\label{eq_19}
\footnotesize
    \begin{array}{l}
         p(k)=   \\
          \begin{cases}
                1 & 1\leq k \leq r_i+1\\
                \sum\limits_{\iota = 1}^{r_i+1}  \dfrac{1}{(\iota-1)!} \left(\prod\limits_{j=-\iota+1}^{-1}(k+j)\right)(1-\omega_{o_i})^{\iota-1}\omega_{o_i}^{k-\iota} & k\ge r_i+2.
            \end{cases} 
    \end{array}
\end{equation}
For simplicity, let \(\prod_{j=0}^{-1}(k+j)=1\). Note that \eqref{eq_18} is the disturbance estimation error formulated in the discrete-time domain. \(f_i(k)=f_i(kT_s)\), \(\hat{f}_i(k)=\hat{f}_i(kT_s)\), \(\Delta f_i(k)=f_i((k+1)T_s)-f_i(kT_s)\), \(T_s\) is the sample time used in calculating the error bound, and \(r_i\) and \(\omega_{o_i}\) are the relative degree of \(\dot{q}_i\) (output measurement) with respect to \(f_i\) (disturbance as input) in the \(i\)-th joint subsystem and the poles of the \(i\)-th ESO in the discrete-time domain, respectively. Readers are referred to our previous work \cite{chen2022relationship} for a complete proof of this error bound.

From the definition of derivative, Assumption \ref{asm1}, and \eqref{eq_18}, the disturbance estimation error bound of ESO is 
\begin{equation}\label{eq:RCBF_ErrorBound}
    |f_i(k)-\hat{f}_i(k)|\leq\Gamma_i(\omega_{o_i}, T_s)=\left(\sum\limits_{k=1}^{\infty}p(k)\right)l_f T_s.
\end{equation}
{\begin{rem}
As mentioned in \cite{chen2023robust}, although \eqref{eq:RCBF_ErrorBound} is derived in the discrete-time domain, the disturbance estimation error bound obtained from it remains consistent in the continuous-time domain. According to \eqref{eq:RCBF_ErrorBound}, the error bound is directly proportional to the sampling time $T_s$. Therefore, $T_s$ should be sufficiently small to yield a tight and effective safety bound. In our experiment setting, $T_s$ is set to $1$ ms (corresponding to the $1$ kHz hardware control loop), which is small enough to guarantee safety without introducing excessive conservatism. \(\omega_{o_i}\) in the discrete-time domain is converted from the continuous-time domain through the Z-transform.
\end{rem}}

\begin{theorem}\label{Th_RCBFwithFhat}
    Given the system \eqref{eq:RCBF_dynamics} and the ESO in \eqref{eq: ESO} under Assumption \ref{asm1}, any controller \(u(x)\in K_{\text{rcbf}}\) renders the set \(\mathcal{C}_s\) forward invariant for system \eqref{eq:RCBF_dynamics}, where
    \begin{equation}\label{eq:RCBFUSetFbound}
        \begin{array}{r@{}l}
            K_{\text{rcbf}}(t,x,f,\hat{f}) \triangleq & \{u\in \mathcal{U}: L_{\psi}^r h(x) + L_g L_{\psi}^{r-1} h(x)u \\
             & + L_{\bar{g}} L_{\psi}^{r-1} h(x)\hat{f} - |L_{\bar{g}} L_{\psi}^{r-1} h(x)|\Gamma(\omega_o, T_s) \\
             & +\sum\limits_{j=0}^{r-1}k_j h^{(j)}(x) \geq 0\}, 
        \end{array}
    \end{equation}
    \(|L_{\bar{g}} L_{\psi}^{r-1} h(x)|\) denotes the absolute value of each element, and \(\Gamma(\omega_o, T_s)=[\Gamma_1(\omega_{o_1},T_s), \cdots, \Gamma_n(\omega_{o_n},T_s)]^T\). 
\end{theorem}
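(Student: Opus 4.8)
The plan is to reduce this statement to Theorem \ref{Th_RCBFwithf}, which has already established forward invariance of $\mathcal{C}_s$ under the exact-disturbance constraint set $K_{\text{cbf}}(t,x,f)$ defined in \eqref{eq:RCBFUSetF}. The key observation is that it suffices to show the robust inclusion $K_{\text{rcbf}}(t,x,f,\hat f)\subseteq K_{\text{cbf}}(t,x,f)$ for every $t$, $x$, and realization of $f$; once this is shown, any Lipschitz $u(x)\in K_{\text{rcbf}}$ also lies in $K_{\text{cbf}}(t,x,f)$, and Theorem \ref{Th_RCBFwithf} immediately gives forward invariance of $\mathcal{C}_s$. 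So the proof is essentially a one-step worst-case bounding argument plus an invocation of the prior theorem.

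First I would write the defining inequality of $K_{\text{rcbf}}$ from \eqref{eq:RCBFUSetFbound} and rewrite the exact term $L_{\bar g}L_\psi^{r-1}h(x)\hat f$ as $L_{\bar g}L_\psi^{r-1}h(x)f + L_{\bar g}L_\psi^{r-1}h(x)(\hat f - f)$, exactly as is done in the transition from \eqref{eq:RCBFUSetF} to \eqref{eq:RCBFUSetFhat}. Then I would invoke the ESO error bound \eqref{eq:RCBF_ErrorBound}, which under Assumption \ref{asm1} gives the componentwise estimate $|f_i - \hat f_i|\le \Gamma_i(\omega_{o_i},T_s)$, i.e. $|\hat f - f|\le \Gamma(\omega_o,T_s)$ elementwise. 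Applying the triangle inequality elementwise to the row vector $L_{\bar g}L_\psi^{r-1}h(x)$ yields
\begin{equation}
L_{\bar g}L_\psi^{r-1}h(x)(\hat f - f) \ \ge\ -|L_{\bar g}L_\psi^{r-1}h(x)|\,\Gamma(\omega_o,T_s),
\end{equation}
where the product on the right is the sum of componentwise products of absolute values. Consequently, if $u$ satisfies the $K_{\text{rcbf}}$ inequality \eqref{eq:RCBFUSetFbound}, then
\begin{equation}
L_\psi^r h(x) + L_g L_\psi^{r-1}h(x)u + L_{\bar g}L_\psi^{r-1}h(x)f + \sum_{j=0}^{r-1}k_j h^{(j)}(x)\ \ge\ 0,
\end{equation}
which is precisely membership in $K_{\text{cbf}}(t,x,f)$ of \eqref{eq:RCBFUSetF}. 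Hence $u(x)\in K_{\text{rcbf}}$ implies $u(x)\in K_{\text{cbf}}(t,x,f)$.

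Finally I would close by applying Theorem \ref{Th_RCBFwithf}: since $x(0)\in\mathcal{C}_s$ and $u(x)$ is Lipschitz continuous and lies in $K_{\text{cbf}}(t,x,f)$ for all $t$, the set $\mathcal{C}_s=\cap_{j=0}^r \mathcal{C}_j$ is forward invariant for system \eqref{eq:RCBF_dynamics}. I expect the main (and really only) subtlety to be making the worst-case bounding step rigorous at the level of the row vector $L_{\bar g}L_\psi^{r-1}h(x)$ — namely justifying that replacing $L_{\bar g}L_\psi^{r-1}h(x)(\hat f-f)$ by $-|L_{\bar g}L_\psi^{r-1}h(x)|\Gamma(\omega_o,T_s)$ is valid regardless of the sign pattern of the entries of the row vector and of the (unknown) sign of each estimation error $\hat f_i - f_i$; this is a routine componentwise argument but should be stated carefully since $f$ is a vector and the disturbance channel matrix $L_{\bar g}L_\psi^{r-1}h(x)$ need not be sign-definite. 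A secondary point worth a sentence is that Assumption \ref{asm1} is what licenses the use of \eqref{eq:RCBF_ErrorBound} here, so that the error bound $\Gamma(\omega_o,T_s)$ is genuinely a valid uniform bound over $x\in\mathcal{X}$, $u\in\mathcal{U}$, and $t\ge 0$.
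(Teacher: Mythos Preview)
Your proposal is correct and essentially mirrors the paper's own proof: the paper also adds and subtracts $L_{\bar g}L_\psi^{r-1}h(x)\hat f$, applies the componentwise ESO error bound \eqref{eq:RCBF_ErrorBound} to replace $-L_{\bar g}L_\psi^{r-1}h(x)(\hat f-f)$ by $-|L_{\bar g}L_\psi^{r-1}h(x)|\Gamma(\omega_o,T_s)$, obtains $h_r(x)\geq 0$, and then invokes the iterative argument of Theorem~\ref{Th_RCBFwithf}. The only cosmetic difference is that you phrase the reduction as a set inclusion $K_{\text{rcbf}}\subseteq K_{\text{cbf}}(t,x,f)$ whereas the paper writes the same chain of inequalities directly on $h_r(x)$.
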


\begin{proof}
     From Theorem \ref{Th_RCBFwithf}, we need to prove \(h_r(x)\geq 0\), \(\forall t>0\). Substituting \eqref{eq:h_derivative} and \eqref{eq:RCBF_ErrorBound} into \eqref{eq:h_r} yields
    \begin{equation}
        \begin{array}{r@{}l}
            h_r(x) = &  L_{\psi}^{r}h(x) + L_g L_{\psi}^{r-1} h(x)u + L_{\bar{g}} L_{\psi}^{r-1} h(x)f \\
             & + \sum\limits_{j=0}^{r-1}k_j h^{(j)}(x) \\
            = &  L_{\psi}^{r}h(x) + L_g L_{\psi}^{r-1} h(x)u + L_{\bar{g}} L_{\psi}^{r-1} h(x)\hat{f} \\
              & -L_{\bar{g}} L_{\psi}^{r-1} h(x)(\hat{f}-f)+\sum\limits_{j=0}^{r-1}k_j h^{(j)}(x) \\
            \geq & L_{\psi}^{r}h(x) + L_g L_{\psi}^{r-1} h(x)u + L_{\bar{g}} L_{\psi}^{r-1} h(x)\hat{f} \\
              & -|L_{\bar{g}} L_{\psi}^{r-1} h(x)|\Gamma(\omega_o,T_s)+\sum\limits_{j=0}^{r-1}k_j h^{(j)}(x) \geq 0.
        \end{array}
    \end{equation}

    Then, we have \(h_r(x)\geq 0\), \(\forall t>0\) as \(x(0)\in\mathcal{C}_r\). Using the same procedure in Theorem \ref{Th_RCBFwithf}, the set \(\mathcal{C}_s\) is forward invariant for system \eqref{eq:RCBF_dynamics}. \hfill $\square$
\end{proof}

The safety specification requires the end-effector to maintain a non-negative distance from a virtual wall. 
The safety function is defined as \( h(x) = y - y_0 \),  where \( y \) and \( y_0 \) denote the positions of the end-effector and the virtual wall, respectively. It can be verified that \(h(x)\) has a relative degree of two \cite{nguyen2016exponential}: 
\begin{equation}
\label{eq:wallcbf}
\begin{array}{ll}
h_0=h ,\\
h_1=\dot{h}_0+\gamma h_0&=\dot{y}+\gamma(y-y_0),\\
h_2=\dot{h}_1+\gamma h_1&=\ddot{y}+2\gamma\dot{y}+\gamma^2(y-y_0),\\
\end{array}
\end{equation}
where a positive constant \(\gamma_1=\gamma_2=\gamma\) is chosen, 
\begin{equation*}
\dot{y}=\left(\frac{\partial  \zeta}{\partial q} \right) \dot{q}, ~\text{and}~
\ddot{y}=\frac{d}{dt} \left(\frac{\partial  \zeta}{\partial q} \right) \dot{q}+\frac{\partial \zeta}{\partial q} \ddot{q}. 
\end{equation*}
Combining equations \eqref{eq:nominal model}, \eqref{eq:f}, and \eqref{eq:wallcbf} yields
\begin{equation}
\label{eq:h2_derive}
\begin{array}{ll}
h_2=&2\gamma\dot{y}+\gamma^2(y-y_0)+\frac{d}{dt} \left(\frac{\partial  \zeta}{\partial q} \right) \dot{q}\\
&+\frac{\partial  \zeta}{\partial q} \Bigl(- \bar{M}^{-1} (\bar{C} \dot{q} +\bar{G}-\bar{K}_d {u}) +f \Bigr).
\end{array}
\end{equation}

To simultaneously achieve tracking and safe control, the nominal controller for {$u_{\text{total}}$}, designed in~\eqref{eq:overall control law}, should be minimally intervened. Thus, the following QP-CBF is constructed: 
\begin{equation}
\begin{aligned}
\label{eq:qp}
{u_{\text{safe}}} = &\operatorname*{arg\,min}_{u\in \mathcal{U}} \|  u-{u_{\text{total}}}\|^2 \\
\textrm{s.t.} \quad & h_i(x(0))>0, ~i=0, 1, 2\\
& {L_{\psi}^2h(x)} + L_g L_{\psi} h(x)u \\
& + L_{\bar{g}} L_{\psi} h(x)\hat{f} - {|L_{\bar{g}} L_{\psi} h(x)|}\Gamma(\omega_o, T_s) \\
& +2\gamma\dot{h}(x)+\gamma^2 h(x) \geq 0.
\end{aligned}
\end{equation}

\section{Hardware Experiments}
\label{sec: Hardware Experiments}
We use the PUMA 500 robot for hardware experiments, as shown in Fig. \ref{fig:PUMA 500 robot}. It has three main joints and a spherical wrist, providing six degrees of freedom. $q_1$, $q_2$, and $q_3$ correspond to the waist, shoulder, and elbow joints, while $q_4$, $q_5$, and $q_6$ represent the wrist rotation, bend, and flange angles. Since our focus is task-space tracking (\emph{i.e.}, wrist center position), only the first three joints are used. Each joint is driven by a DC motor powered by an Adept Technology MV-19 Power Chassis. The control program is built in MATLAB Simulink, which generates C code and an SDF file, and runs via a dSPACE 1103 for control and data acquisition.

\begin{figure}[htbp]
    \centering
    \includegraphics[width=2in]{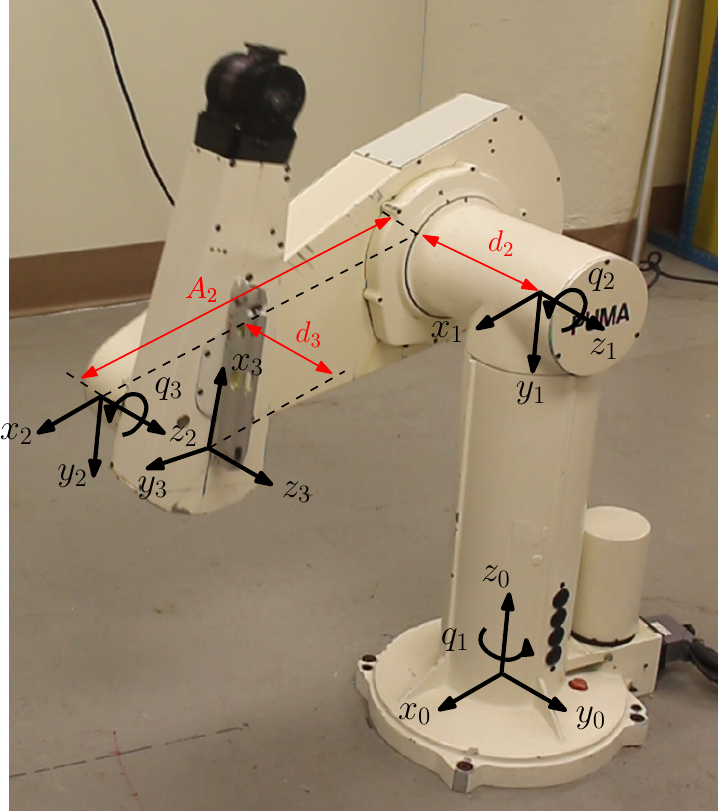}
    \caption{PUMA 500 robot used as a testbed in our research.}
    \label{fig:PUMA 500 robot}
\end{figure}

The inner-loop controller is configured as $\tau = -K_p (q-q_d)-K_d (\dot{q}-\dot{q}_d)$, where $K_{p}=\mathrm{diag}([18, 144, 9])$ and $K_d=\mathrm{diag}([18, 12, 9])$. The gains are determined by the actuator driver and voltage-to-torque conversion, and are not retuned for the specific task.

This setup is used to emulate a degraded inner-loop performance scenario, as commonly encountered in practice due to factors such as wear, payload variations, and unmodeled dynamics, as discussed in the Introduction. The structure and true parameters of the inner-loop controller remain unknown to the outer-loop controller, consistent with the closed-architecture assumption.

The nominal second-order system model \eqref{eq:reduced nominal model} is used for controller design. The reference trajectory is defined as 
${q}^*(t)=\big(0.5\sin t,\; 0.25\sin(2t)+0.534,\; 0.25\sin(2t)+\tfrac{\pi}{2}\big)^\top$.

The tracking performance of the inner-loop controller alone is illustrated in Fig.~\ref{fig: nominalPD}. As shown in Fig.~\ref{fig: nominalPDperformance} and Fig.~\ref{fig:nominalPDtrajectory}, the inner-loop controller exhibits non-ideal tracking behavior under this setting. This reflects realistic conditions where the inner-loop controller is fixed and not optimized for the given task.

\begin{rem}
The inner-loop controller is not intentionally weakened. Instead, this configuration is designed to emulate realistic performance degradation and to evaluate the proposed method under multiple sources of uncertainty, including inner-loop imperfections, model mismatch, and external disturbances.
\end{rem}

In the following, we demonstrate that the proposed method significantly improves both tracking performance and safety under these uncertainties.

We use the nominal model \eqref{eq:reduced nominal model} throughout the experiments. A key parameter in the controller design is the control gain in \eqref{eq:overall control law}, $b_0 \triangleq \bar{M}^{-1}\bar{K}_d$, which depends on the nominal inertia matrix $\bar{M}$ and the inner-loop proportional gain. The nominal inertia $\bar{M}$ is adopted from \cite{khalaf2019trajectory}. Following a common ESO tuning rule, we tune each joint individually by initially setting $b_0$ large to ensure stability and then gradually reducing it to improve tracking accuracy. The final nominal control gain is $\mathrm{diag}([20, 40, 10])$. The observer and control bandwidths are set to $80~\mathrm{rad/s}$ and $10~\mathrm{rad/s}$, respectively, for each joint. We refer readers to \cite{xue2015performance} for additional instructions and theoretical support on tuning nominal control gain.

\begin{figure*}[h!]
\centering
\subfloat[\textnormal{Joint space performance}]{\includegraphics[width=2.9in]{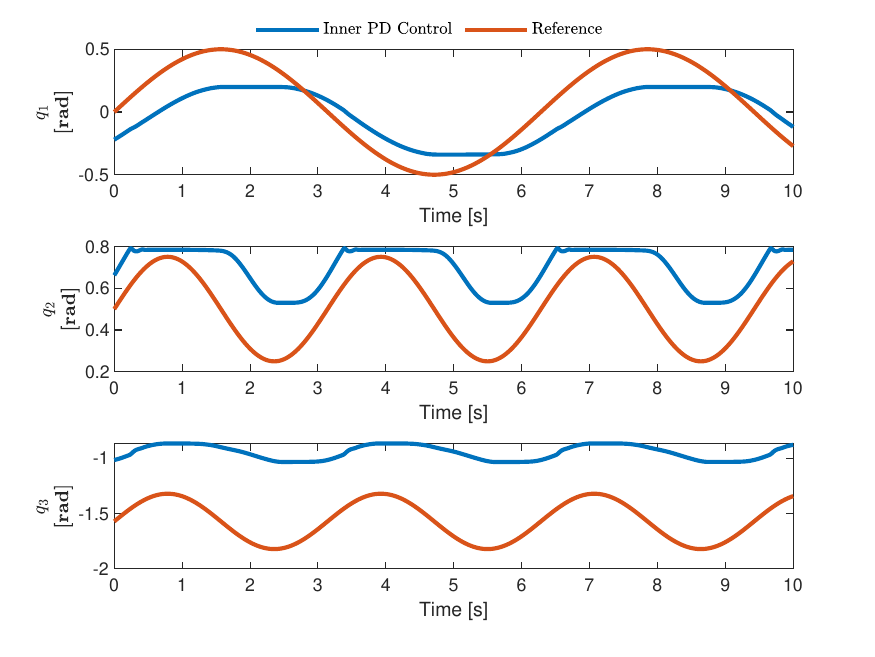}%
\label{fig:  nominalPDperformance}}
\hfil
\subfloat[\textnormal{Task space performance}]{\includegraphics[width=2.9in]{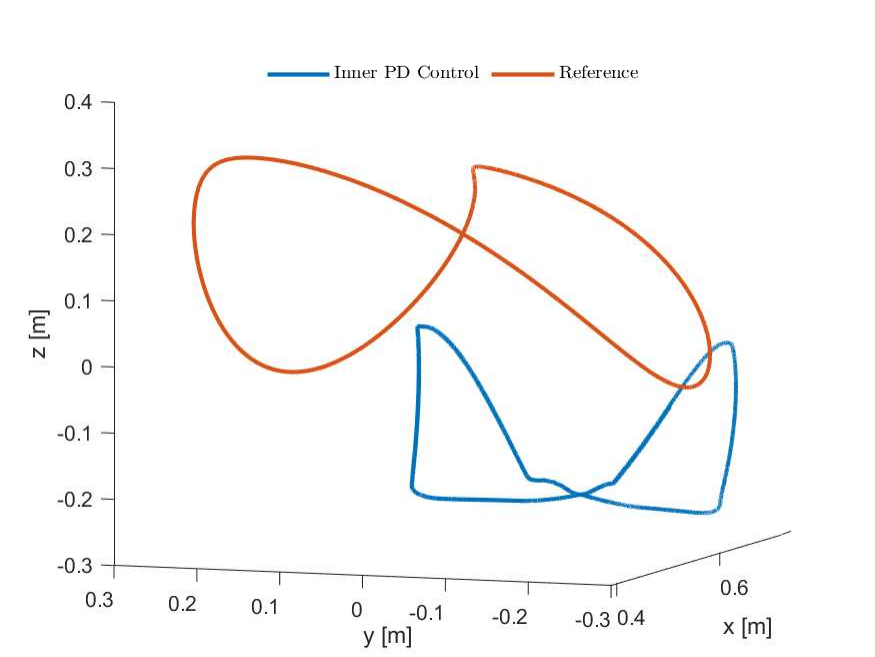}%
\label{fig:nominalPDtrajectory}}
\caption{Inner-loop PD controller tracking performance over $10$ s under degraded conditions. (a) Trajectory tracking performance in the joint space. (b) Trajectory tracking performance in the Cartesian space. The inner-loop controller can not track the reference trajectory.}
\label{fig: nominalPD}
\end{figure*}

\subsection{Tracking Control}
We compare the tracking performance of ours, with \cite{ahanda2022adaptive}. The reason is that we share the same minimal assumptions about the inner-loop controller, which, to the best of our knowledge, are the most general in the literature.

\subsubsection{\textbf{Scenario 1: Normal Case without Payload}}
The robot tracks a reference trajectory without a payload. The tracking performance of the learning-based adaptive controller \cite{ahanda2022adaptive} and ours is shown in Fig.~\ref{fig: Tracking S1}. Fig.~\ref{fig: TrackingS1joint} compares the reference (dashed) and actual joint trajectories of $q_1$, $q_2$, and $q_3$, where blue and red lines represent our method and \cite{ahanda2022adaptive}, respectively. The x-axis denotes time (s), and the y-axis denotes joint angles (rad). Fig.~\ref{fig: TrackingS1task} shows the Cartesian trajectories in $x$, $y$, and $z$. The close alignment of the final and reference trajectories indicates that both controllers achieve accurate tracking despite model uncertainties and an imperfect inner loop.

The proposed method achieves tracking performance comparable to the learning-based adaptive controller in the no-payload experiment. As shown in Fig.~\ref{fig:Tracking S1 inner command}, both approaches yield similarly smooth control signals. Transient responses occur initially, particularly in the second joint (Fig.~\ref{fig: TrackingS1joint}), due to the offset between the initial and reference positions. The learning-based controller exhibits a longer transient phase (see Figs.~\ref{fig: TrackingS1joint} and \ref{fig:Tracking S1 inner command}). This reflects a typical trade-off in adaptive control between transient duration and tracking accuracy.
\begin{figure*}[htbp]
\centering
\subfloat[\textnormal{Joint space comparison}]{\includegraphics[width=2.9in]{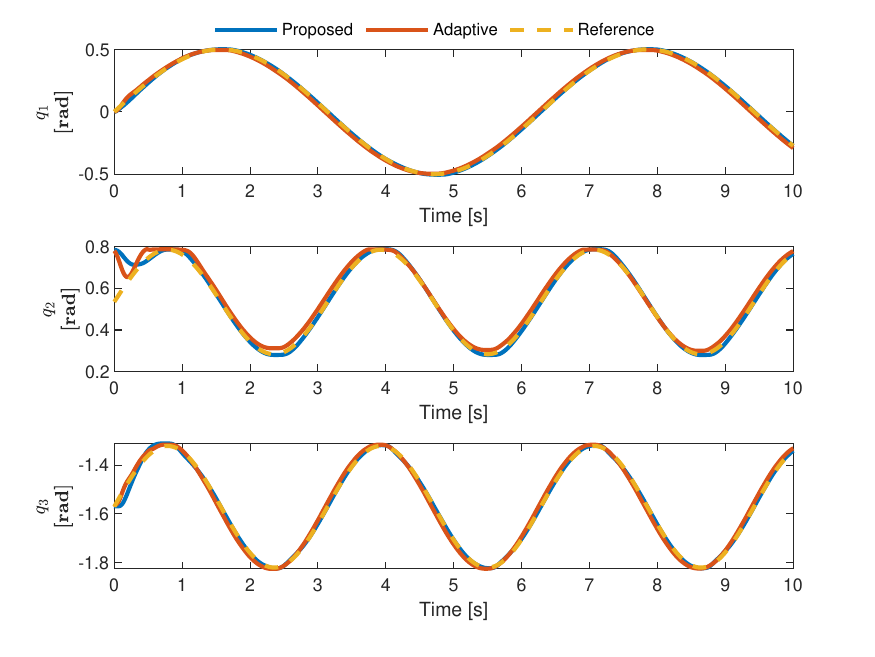}%
\label{fig:  TrackingS1joint}}
\hfil
\subfloat[\textnormal{Task space comparison}]{\includegraphics[width=2.9in]{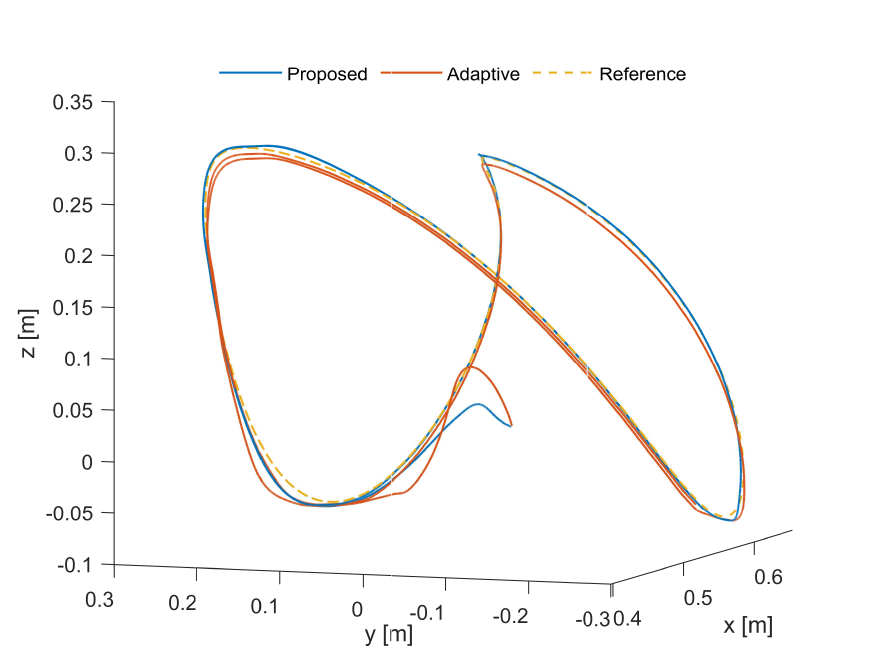}%
\label{fig:  TrackingS1task}}
\caption{Comparative tracking precision analysis: (a) Root Mean squared error (RMSE) in joint space: Proposed ($4.1 \times 10^{-3}$, $2.76 \times 10^{-2}$, $7.6 \times 10^{-3}$) radians vs. \cite{ahanda2022adaptive} ($4.3 \times 10^{-3}$, $2.7 \times 10^{-2}$, $5.7 \times 10^{-3}$) radians for each joint. (b) Cartesian tracking error RMSE: Proposed ($2.3 $ mm, $2.4 $ mm, $15.5$ mm) vs. \cite{ahanda2022adaptive} ($2.9$ mm, $2.1$ mm, $16.1$ mm) for each joint. Data sampled at 1 kHz over a 10 s trajectory execution.}
\label{fig: Tracking S1}
\end{figure*}

\begin{figure}[htbp]
    \centering
    \includegraphics[width=2.9in]{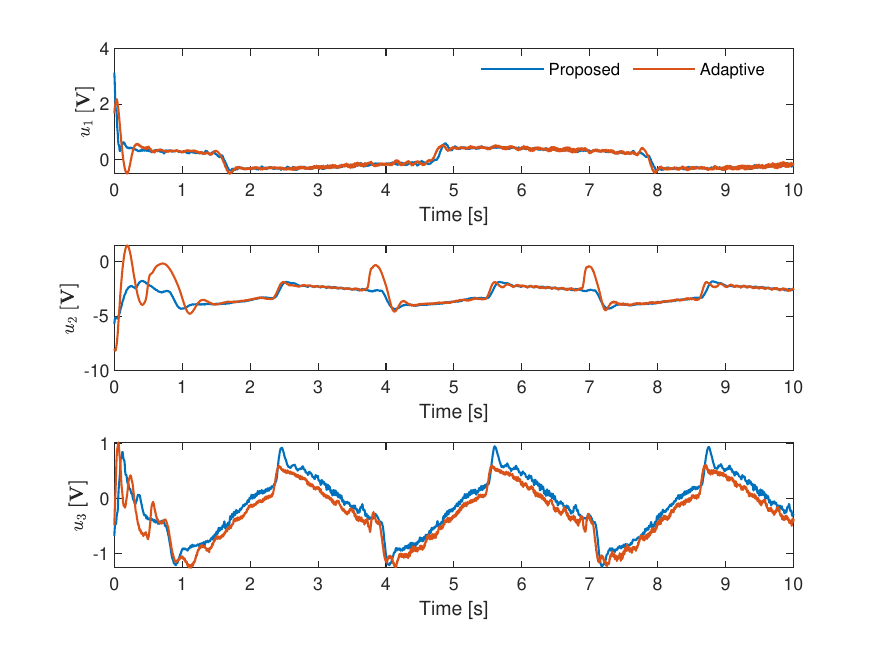}
    \caption{Inner-loop control command comparison for $10$ s. $u_1$, $u_2$, and $u_3$ are the control inputs (voltages) for joint $q_1$, $q_2$, and $q_3$, respectively.}
    \label{fig:Tracking S1 inner command}
\end{figure}

\subsubsection{\textbf{Scenario 2: Robustness Testing with Different Payloads}}
We evaluate controller robustness with payloads of 1 kg, 1.5 kg, and 2 kg attached to the end-effector. Each payload experiment is repeated five times for ANOVA analysis, with identical controller and observer parameters. Fig.~\ref{fig:Tracking Error Box Plot} and Fig.~\ref{fig:Adaptive Tracking Error Box Plot} present the tracking error ANOVA results for both methods. As shown earlier, our approach achieves a shorter transient time than the learning-based adaptive controller. For clarity, transient data (first 5 s) are excluded, and the plots show steady-state tracking errors (after 5 s).

\begin{figure*}[htbp]
  \centering
  \subfloat[\textnormal{ANOVA analysis for ours}]{\includegraphics[width=0.8\linewidth]{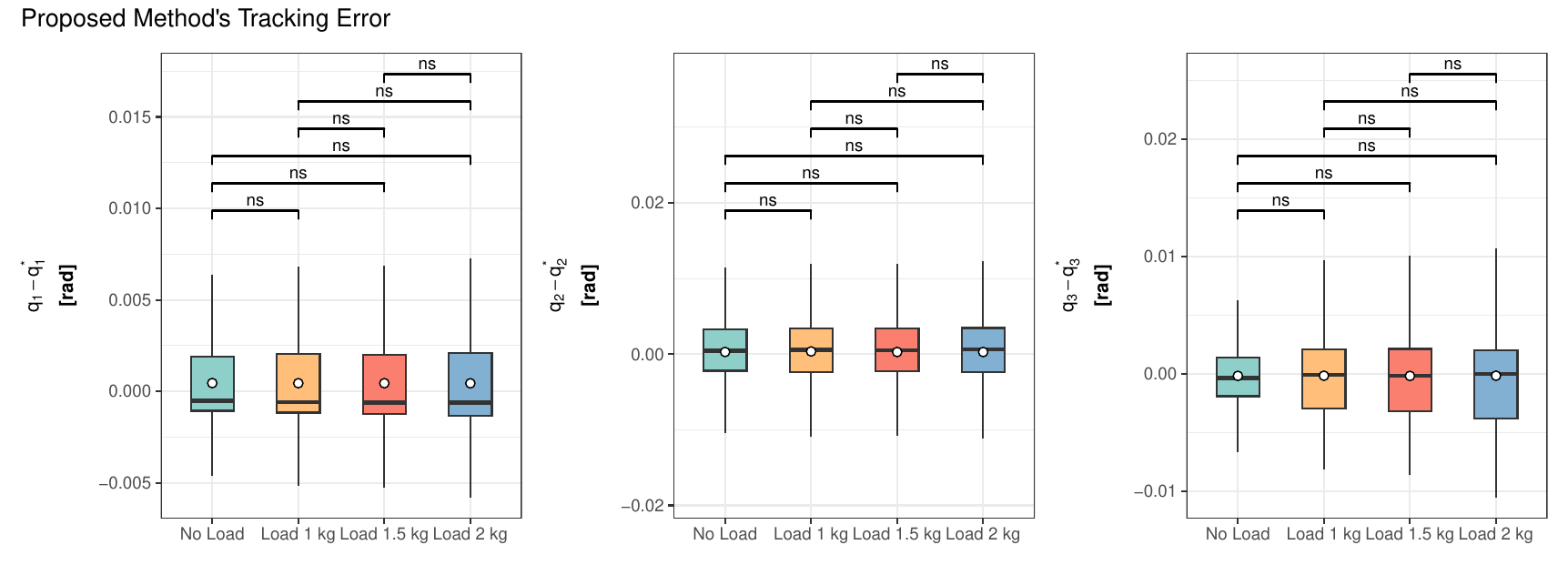}%
  \label{fig:Tracking Error Box Plot}}\\
  
  \subfloat[\textnormal{ANOVA analysis for learning-based adaptive control \cite{ahanda2022adaptive}}]{\includegraphics[width=0.8\linewidth]{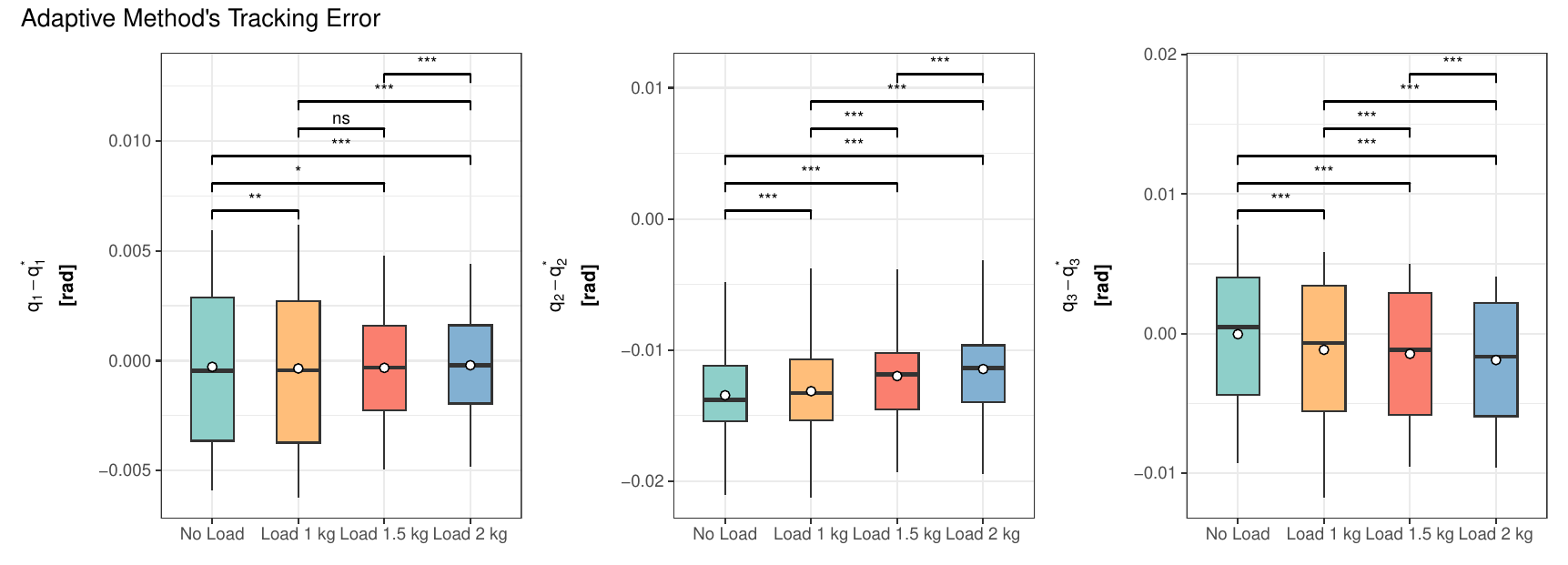}%
  \label{fig:Adaptive Tracking Error Box Plot}}

  \caption[Steady-state tracking error comparison under different payloads]{Steady-state tracking error comparison under different payloads. (a) ANOVA of ours. (b) ANOVA of the learning-based adaptive control \cite{ahanda2022adaptive}. The proposed method shows consistent tracking across payloads, as indicated by uniform box plots, while the adaptive method exhibits larger error variations, implying reduced robustness. Significance levels: ns ($p > 0.05$), * ($p \leq 0.05$), ** ($p \leq 0.01$), *** ($p \leq 0.001$). Each experiment is repeated five times.}
  \label{fig:Comparison Tracking Errors}
\end{figure*}

As shown in Fig.~\ref{fig:Tracking Error Box Plot}, the ANOVA results indicate that our method maintains statistically consistent errors across different payloads ($p>0.05$), demonstrating strong robustness, whereas the adaptive method shows significant variations with payload changes ($p<0.05$).

In summary, our approach offers several advantages over the learning-based adaptive control \cite{ahanda2022adaptive}. As shown in Fig.~\ref{fig: Tracking S1}, it achieves comparable tracking performance without a payload, lower steady-state error, and stable performance under varying payloads (Fig.~\ref{fig:Comparison Tracking Errors}). It also exhibits a shorter transient response, as seen in the trajectories and control signals during the first second (Figs.~\ref{fig: Tracking S1} and \ref{fig:Tracking S1 inner command}). Moreover, our design is simpler than a neural network, requiring fewer tunable parameters and offering greater interpretability. Finally, the proposed disturbance compensation supports both tracking and safe control with state constraints, whereas \cite{ahanda2022adaptive} cannot handle constraints.

\subsection{Safe Control}
It is worth noting that $\gamma$ in \eqref{eq:qp} critically influences the CBF’s behavior. A larger $\gamma$ drives the trajectory closer to the safety set boundary (more aggressive), while a smaller $\gamma$ keeps it farther away (more conservative). To achieve a safe yet efficient controller, we set $\gamma = 10$ in all experiments. {We have conducted experiments in three scenarios.}


\subsubsection{{\textbf{Scenario 1: Safety and Robustness Testing with Different Payloads}}}
A virtual boundary is defined at $y_0= \begin{bmatrix} y_1& -0.1& y_3 \end{bmatrix}^T$ as a spatial constraint for the end effector. It moves freely along the $x$ and $z$ axes but is constrained along the $y$ axis with a lower limit of $-0.1$ m. The system is tested with payloads of $1.0$, $1.5$, and $2.0$ kg attached to the end effector.

Three approaches are compared: (1) conventional CBF with a nominal model (see Fig. \ref{fig: CBFNoESOCompare}); (2) our robust CBF with a nominal model and an ESO for disturbance estimation, compared to the DOB-CBF \cite{dacs2022robust} (see Fig. \ref{fig: MFCBFNoErrorBoundCompare}); and (3) our robust CBF with a nominal model, ESO for disturbance estimation, and estimation error bound, compared to the DOB-CBF with estimation error bound (see Fig. \ref{fig: MFCBFErrorBoundCompare}).

\begin{figure}[htbp]
    \centering
    \includegraphics[width=2.9in]{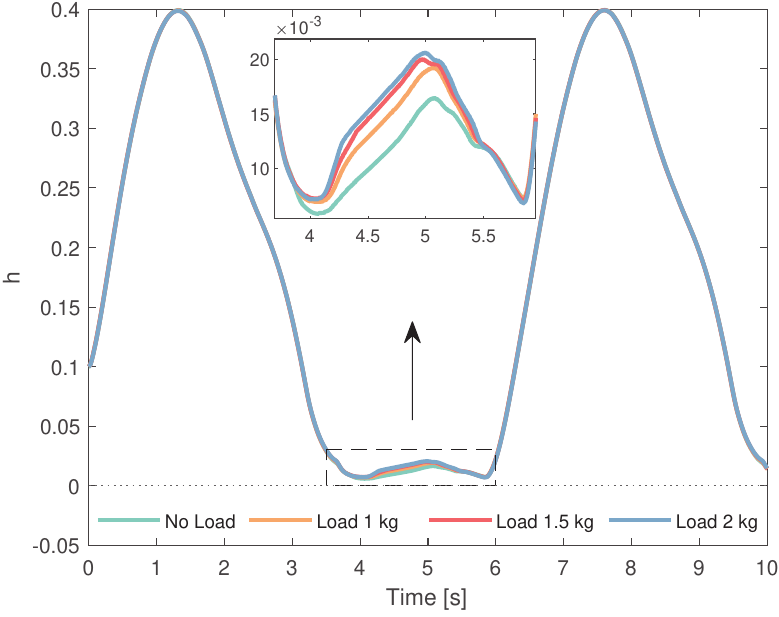}
    \caption{\textbf{Safety boundary: } $\bm{y_0}=\begin{bmatrix}\bm{y_1}&\bm{-0.1}&\bm{y_3}\end{bmatrix}^T$. {The horizontal dashed line at $h = 0$ represents the virtual wall.} Conventional CBF with nominal model: safety performance with variances under different payloads, with the lowest value of $h$ function at $5.9$ mm.}
    \label{fig: CBFNoESOCompare}
\end{figure}

\begin{figure*}[htbp]
\centering
\subfloat[\textnormal{Estimation error bounds not included}]{\includegraphics[width=2.9in]{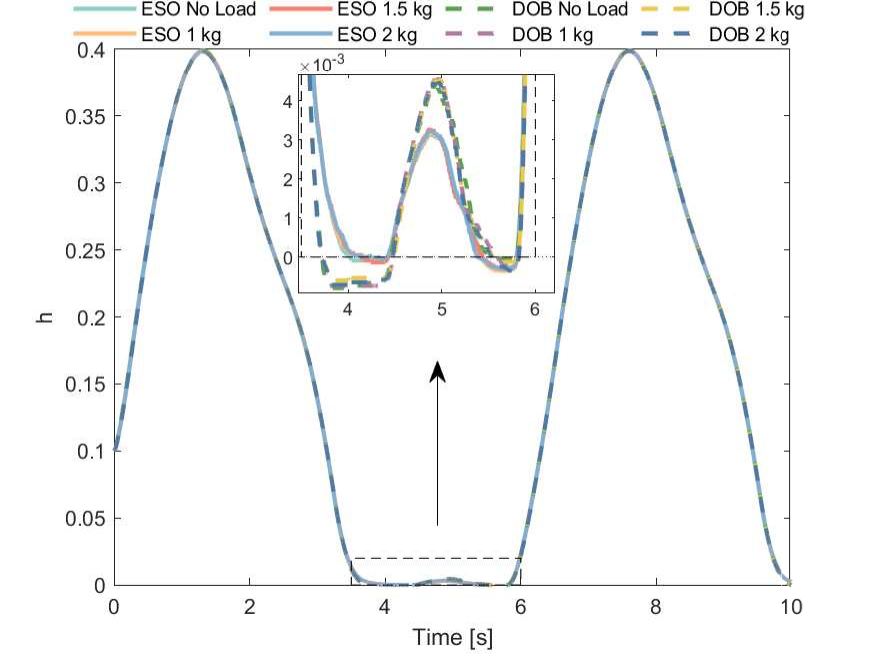}%
\label{fig: MFCBFNoErrorBoundCompare}}
\hfil
\subfloat[\textnormal{Estimation error bounds included}]{\includegraphics[width=2.9in]{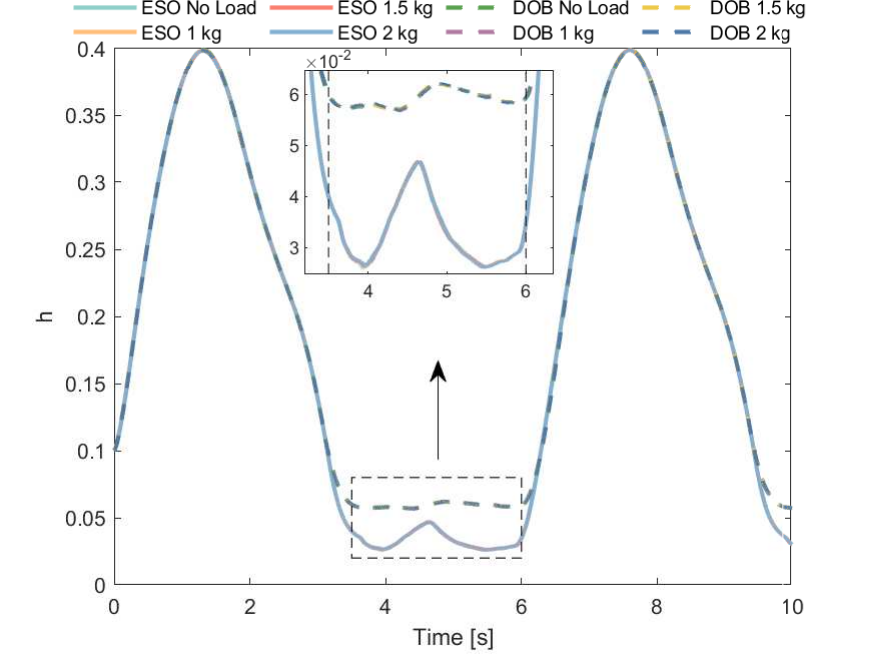}%
\label{fig: MFCBFErrorBoundCompare}}
\caption{{Safety boundary: $\bm{y_0}=\begin{bmatrix}\bm{y_1}&\bm{-0.1}&\bm{y_3}\end{bmatrix}^T$. {The horizontal dashed line at $h = 0$ represents the virtual wall.} (a) CBFs with the nominal model and disturbance compensation by ESO (solid lines) and DOB (dashed lines) show consistent (robust) safety performance under different payloads, with the lowest \( h \) value being \(-0.366\)~mm (unsafe) for our method and \(-0.597\)~mm (unsafe) for the DOB-CBF. (b) CBFs with the nominal model, disturbance compensation by ESO (solid lines) and DOB (dashed lines), and error bounds show consistent (robust) safety performance under different payloads, with the lowest \( h \) value being {26.1~mm (safe and less conservative) for our method and 56.8~mm (safe and more conservative) for the DOB-CBF}.} }
\end{figure*}

Fig. \ref{fig: CBFNoESOCompare} shows the safety performance of the CBF using only the nominal second-order model \eqref{eq:reduced nominal model}, which neglects model errors, inner-loop uncertainty, and external disturbances. {The safety function $h$ varies under different payloads, revealing limited robustness.}

We implement the DOB-CBF \cite{dacs2022robust} for comparison. The DOB-CBF estimates \( b_e(x, d) = \frac{\partial \zeta}{\partial q} f \) in the CBF, while our method directly estimates \( f \) in the original dynamics. Thus, the two observers target different quantities, and their gains are not directly comparable. For fairness, excluding error bounds, we tune both observers’ gains to achieve similar minimum \( h \) magnitudes, indicating comparable safety performance. Fig. \ref{fig: MFCBFNoErrorBoundCompare} compares the $h$ functions of the DOB-CBF and the proposed method, showing that:

\begin{enumerate}
\item The safety function $h$ exhibits consistent performance under different payloads, indicating robustness of both methods.

\item A slight safety violation occurs due to disturbance estimation error, with minimum values of $-0.597$ mm for DOB-CBF and $-0.366$ mm for the proposed method, highlighting the need to account for estimation error bounds in both.
\end{enumerate}

We collect experimental data for \(\hat{b}_e(x,d)\) and \(\hat{f}\) from the DOB and ESO, respectively. Their finite differences yield the maximum rates of change, used as estimation error bounds. Fig. \ref{fig: MFCBFErrorBoundCompare} shows the resulting $h$ values after incorporating these bounds. Compared with Fig. \ref{fig: MFCBFNoErrorBoundCompare}, we observe that:

\begin{enumerate}
\item The trajectories remain safe and consistent across different payloads, demonstrating the robustness of both methods.  
\item Incorporating the estimation error bound raises $h$, keeping trajectories strictly within the safety set (positive $h$ values).  
\item With error bounds, our method stays closer to the safety boundary than \cite{dacs2022robust}, indicating lower conservatism. This may be because \cite{dacs2022robust} estimates \( b_e(x,d) = \frac{\partial \zeta}{\partial q} f \), which includes a state-dependent Jacobian, making it more sensitive to measurement noise and compounding estimation errors.  

\end{enumerate}

{\subsubsection{{\textbf{Scenario 2: Safety Testing Subject to External Disturbance}}}
{In this scenario, a constant 6V is applied to $q_1$, the waist joint, intentionally generating a continuous external perturbation that pushes the manipulator's end-effector toward the virtual wall to evaluate the controller's response. The motor torque is controlled by an analog voltage.}}

\begin{figure}[htbp]
    \centering
    {\includegraphics[width=0.95\linewidth]{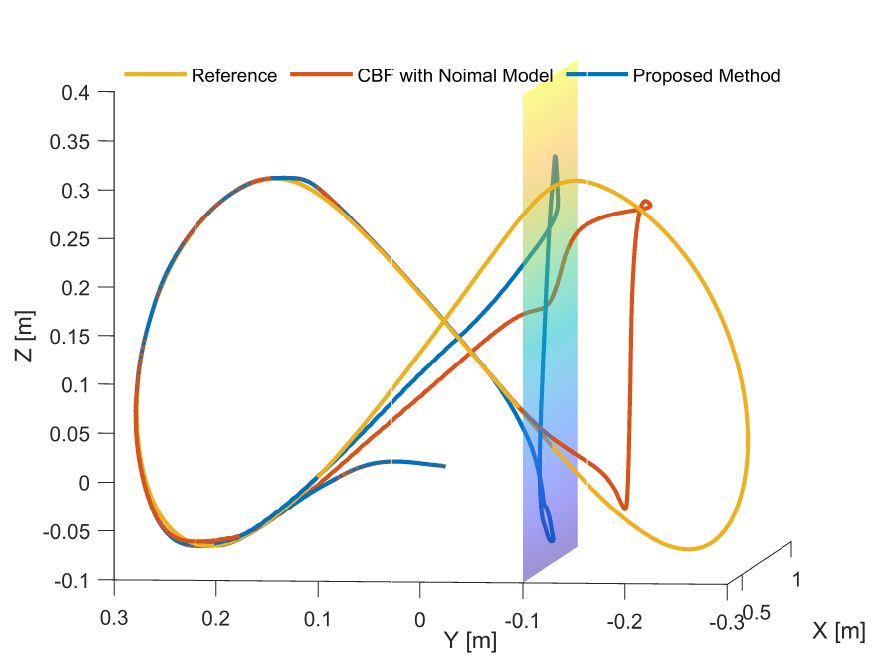}%
    \caption{\textbf{Safety boundary: } $\bm{y_0}=\begin{bmatrix}\bm{y_1}&\bm{-0.1}&\bm{y_3}\end{bmatrix}^T$. {Trajectory comparison of CBF subject to external disturbance. The trajectory (blue) always stays in the safe zone. The unsafe trajectory (red) crosses the safety boundary.}}
    \label{fig:Trajectory with Conventional CBF}}
\end{figure}

{Fig. \ref{fig:Trajectory with Conventional CBF} illustrates a trajectory comparison of CBFs with and without disturbance compensation. The red line shows the unsafe trajectory of the CBF that uses only the nominal model \eqref{eq:reduced nominal model}. While the tracking performance is satisfactory due to the nominal controller’s disturbance rejection capacity, the CBF is expected to intervene when the reference trajectory crosses the safety boundary. Unfortunately, the final trajectory still crosses the safety boundary because the CBF does not account for the external disturbance pushing the end-effector toward the unsafe region. {In contrast, as shown in the blue line,} our approach ensures high-performance tracking when safety is not a concern (see area $y=[0, 0.3]$) and safe control, as the trajectory never crosses the safety boundary.}

{\subsubsection{{\textbf{Scenario 3: Safety Testing Subject to Gravity Disturbance}}}
In this scenario, we set the virtual boundary as $y_0= \begin{bmatrix} y_1& y_2& -0.01 \end{bmatrix}^T$. The end effector is allowed unrestricted motion along the $x$- and $y$-axes, but movement along the $z$-axis is constrained by a lower limit of $-0.01$ m. The disturbance caused by gravity will inevitably push the end effector toward the virtual boundary, leading to unsafe behavior. {Remember that our nominal model \eqref{eq:reduced nominal model}, which has minimal model information, treats gravity as a disturbance.}}

\begin{figure}[htbp]
    \centering
    {\includegraphics[width=0.95\linewidth]{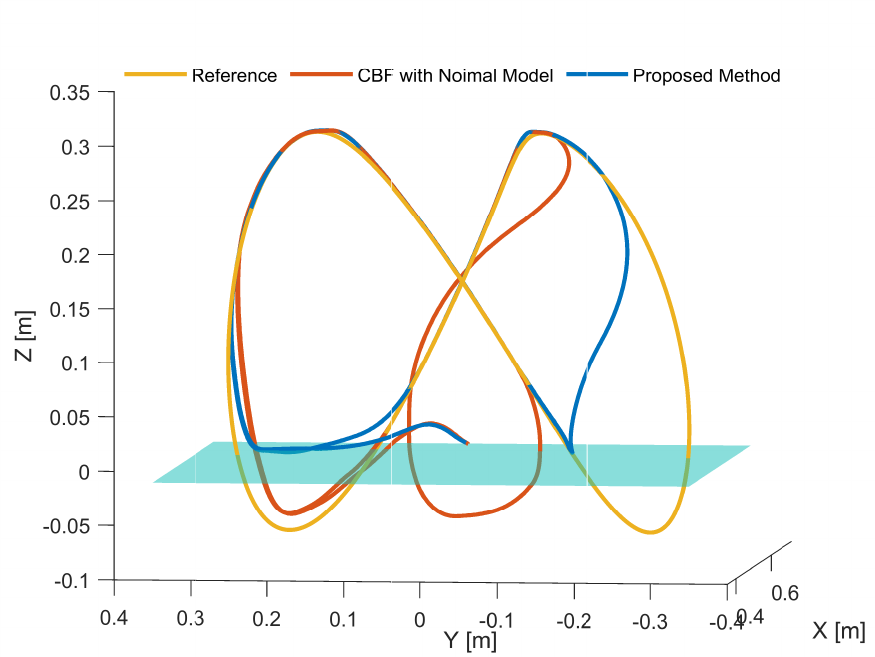}%
    \caption{\textbf{Safety boundary:} $\bm{y_0}=\begin{bmatrix}\bm{y_1}&\bm{y_2}&\bm{-0.01}\end{bmatrix}^T$. {Trajectory comparison of CBFs subject to gravity disturbance. The safe trajectory (blue, ours) remains within the safe zone, while the unsafe trajectory (red, without disturbance compensation) crosses the safety boundary.}
    \label{fig:Conventional CBF z}}}
\end{figure}
{Fig. \ref{fig:Conventional CBF z} shows the trajectory comparison of CBFs subject to gravity disturbance. The red line illustrates the unsafe trajectory of a CBF using only the nominal model \eqref{eq:reduced nominal model}. In contrast, our robust CBF significantly improves performance, as shown by the blue line, which consistently remains within the safe zone.}

\section{Conclusion and future work}
\label{Conslusion and future work}

{This paper presents a unified, plug-and-play outer-loop framework ensuring safe kinematic tracking for closed-architecture robots. The proposed approach achieves robust performance against unknown inner-loop controllers, uncertain dynamics, and external disturbances. Theoretical stability and safety guarantees are validated via real-time experiments on a PUMA 500 robot. Future work will extend this framework to visual servoing systems to address perception-level uncertainties.}

\section{Acknowledgments}
This material is based upon work supported by the National Science Foundation under Grant Nos. 2301543 and 2525200.

\bibliographystyle{unsrtnat}
\bibliography{references}

\end{document}